\documentclass[10pt, a4paper]{article}

\usepackage{times}
\usepackage{soul}
\usepackage{url}
\usepackage[hidelinks]{hyperref}
\usepackage[utf8]{inputenc}
\usepackage[small]{caption}
\usepackage{graphicx}
\usepackage{amsmath}
\usepackage{amssymb}
\usepackage{amsthm}
\usepackage{hyperref}
\usepackage{subcaption}
\usepackage{booktabs}
\usepackage{multirow}
\usepackage{thm-restate}
\usepackage{dsfont}
\usepackage{aligned-overset}
\usepackage{enumerate}
\usepackage[toc, page]{appendix}
\usepackage{xcolor}
\usepackage{csvsimple}
\usepackage[font = small]{caption}
\usepackage[]{natbib}
\usepackage{bookmark}

\graphicspath{{./illustrations/}{../ldglm/application_and_trials/paper_scripts/}}

\newtheorem{theo}{Theorem}[section]

\newtheorem{defi}[theo]{Definition}
\newtheorem{cor}[theo]{Corollary}
\newtheorem{lma}[theo]{Lemma}

\begin{document}
\title{\bf LiD-GLM: Lipschitz-constrained Deep Generalized Linear Models}
\author{Tom Splittgerber\thanks{
    We thank Jens Behrmann for useful feedback on the project direction and manuscript.
    This work has been funded by the Deutsche Forschungsgemeinschaft (DFG, German Research Foundation) - 459360854 as part of the Research Unit 
    "Lifespan AI: From Longitudinal Data to Lifespan Inference in Health" (DFG FOR 5347), University of Bremen (https://lifespanai.de/).}\hspace{.2cm} and Werner Brannath\\
    Competence Center for Clinical Trials Bremen, University of Bremen\\
    and \\
    Marvin N. Wright and Niklas Koenen \\
    Leibniz Institute for Prevention Research and Epidemiology – BIPS \&\\
    University of Bremen
    }
  \maketitle

\bigskip
\begin{abstract}
The combination of traditional statistical models and neural network (NN) components into semi-structured hybrid models is an intriguing approach to construct models that, ideally, combine traditional interpretability with the unprecedented flexibility of NNs. In order to preserve interpretability, it is usually necessary to restrict the NN components to prevent them from dominating the model. However, existing methods that enforce structural constraints on their NN components severely limit their models' flexibility; in contrast, methods that only enforce weak, indirect constraints lose meaningful interpretability.
The method we propose therefore leverages invertible residual neural networks (i-ResNets) to equip generalized linear models with both nonlinear parameter estimation and a flexible correction of their distributional assumptions while always retaining stochastic monotonicity of the modeled distribution in the (formerly linear) predictor. The i-ResNets correspond to a controlled deviation from identity and by constraining their Lipschitz constant one can rigorously limit and quantify how far the hybrid model deviates from its traditional counterpart. This enables a user-specifiable compromise between flexibility and interpretability without limiting the structure of nonlinear and interaction effects that can be learned.
Furthermore, we develop specific inherent interpretation techniques for our model and enforce model identifiability through an adapted post-hoc orthogonalization.
\end{abstract}

\noindent%
{\it Keywords:} Generalized linear model, Conditional distribution model, Normalizing flow, Invertible residual neural network, Semi-structured deep distributional regression
\vfill

\section{Introduction}\label{sec: intro}
The complexity of modern machine learning (ML) models is often drastically higher than that of traditional statistical ones.
This gives them the expressive capacity
to, in many practical scenarios, achieve unprecedented predictive performance. 

As is well established, the price of this flexibility is a
lack of \emph{inherent interpretability}. 
At the current state of ML research, the interpretation of expressive models can usually only be approached with post-hoc methods \cite{molnar_interpretable_2025}. The degree to which such methods describe or recover the true inner workings of a complex model is often unclear, as they rely on assumptions and simplifications \cite{rudin_stop_2019}. 
As a result, there has been increasing interest in models which aim to be both expressive and inherently interpretable \cite{agarwal_neural_2021, ji_comprehensive_2025, rugamer_deepregression_2023}. Still, these models often have to enforce strict structural constraints or accept a significant loss in interpretability.

In this paper, we propose a conditional distribution model that emphasizes inherent interpretability and aims at making the trade-off between its expressiveness and interpretability explicit and controllable. Modeling the conditional distribution (and not only mean and variance) of a target random variable $Y$ given a vector of covariates $\textbf{X}=(X_1,\dots, X_k)$ also enables a precise statistical description of aleatoric uncertainty. 

We base our method on the widely used generalized linear models (GLMs) \cite{mccullagh_generalized_2019}, which are well interpretable and compatible with a wide variety of target data types (continuous, binary, categorical, etc.). 
More precisely, we assume for $Y$ a one-parameter exponential family, i.e.\ a density of the form
\begin{equation}\label{eq: glm-expFamily}
	f_Y(y;\theta, \phi)=\exp\left(\frac{y\theta-b(\theta)}{\phi}+c(y,\phi)\right),
\end{equation}
where $b(\cdot)$ and $c(\cdot,\cdot)$ are known functions 
and the \emph{canonical parameter} $\theta\in \mathbb{R}$ parameterizes the distribution. 
The covariate independent and positive \emph{dispersion parameter} $\phi$ is either known (e.g. $\phi=1$ for the Binomial distribution) or needs to be estimated from the data.
We make the typical regularity assumptions, e.g.\ twice differentiability of $b$ \cite{efron_exponential_2022,mccullagh_generalized_2019}.

The influence of the covariates on the canonical parameter $\theta$, and thereby on the (conditional) distribution of $Y$, is modelled via a known 
function $\theta=\Lambda(\eta)$ of a linear predictor $\eta= \beta_0 + \sum_{i=1}^{k} \beta_i x_i$. Typically, this function is defined as a composition of two inverse functions, namely the link function $\lambda$, that maps the conditional expectation $\mu$ of $Y$ to $\eta$, and the derivative function $b'(\theta)$, that maps the canonical parameter to $\mu$:
\begin{equation}\label{eq: glm-muEquation}
\theta=(b')^{-1}\cdot\lambda^{-1}(\eta)=\Lambda(\eta)=\Lambda\left(\beta_0 + \sum_{i=1}^{k} \beta_i x_i\right). 
\end{equation}
As the focus of this work is on the full distribution, rather than just the conditional mean, we work with $\eta$ and the function $\Lambda$. 

The simple structure of a GLM makes it well suited for interpretation since $Y$ is stochastically increasing in $\theta$ (e.g. \cite{rink_confidence_2025}) and by Equation~\eqref{eq: glm-muEquation} also in $\eta=\beta_0 + \sum_{i=1}^{k} \beta_i x_i$. Therefore, the sign and size of the coefficient $\beta_i$ determines the direction and strength of the effect of each covariate $x_i$ on the distribution of $Y$.

Our approach aims to largely preserve this interpretability while addressing the two main issues of the GLM, namely the limited expressiveness of Equation~\eqref{eq: glm-muEquation} and the distributional restriction stemming from assumption~\eqref{eq: glm-expFamily}. 

While the expressiveness of \eqref{eq: glm-muEquation} will be carefully extended via a nonlinear transformation of the input variables, the distributional assumption~\eqref{eq: glm-expFamily} can be relaxed utilizing a general latent variable representation of parametric families whose existence is guaranteed by the following Theorem:
\begin{restatable}{theo}{trafoTheorem}
\label{thm: nu2-inverseTrafoThm}
	Let $\{f_\theta : \theta\in\mathcal{I}\}$ be a family of probability distributions on $\mathbb{R}$ parameterized by $\theta$. There then exist an absolutely continuous probability distribution with cumulative distribution function $G_0$ and a function $h:\mathcal{I}\times\mathbb{R}\to\mathbb{R}, (\theta,v)\mapsto y := h_\theta(v)$ such that if $\theta\in\mathcal{I}$ is arbitrary, $Y\sim f_\theta$ and $V\sim G_0$ then:
	\begin{equation}\label{eq: nu2-inverseTrafoThm}
		Y \overset{d}{=} h_\theta(V),
	\end{equation}
	where $\overset{d}{=}$ denotes equality in distribution. Furthermore, $h$ can be chosen to be increasing in both arguments.
\end{restatable}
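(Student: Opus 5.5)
The natural route is the inverse-CDF (quantile) transform. I would take $G_0$ to be the uniform distribution on $(0,1)$, so $G_0(v)=\min(\max(v,0),1)$, which is absolutely continuous. For each $\theta\in\mathcal{I}$ let $F_\theta$ be the cumulative distribution function of $f_\theta$, and define the generalized inverse
\[
h_\theta(v) := F_\theta^{-1}(v) = \inf\{y\in\mathbb{R} : F_\theta(y)\ge v\}, \qquad v\in(0,1).
\]
Outside $(0,1)$, $h_\theta$ is extended monotonically, e.g.\ by $h_\theta(v)=h_\theta(1/2)+(v-1/2)$-type continuation or simply as a constant. This extension does not matter for the distribution, since $V\in(0,1)$ almost surely. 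A finite-valued monotone extension is needed only so that $h$ is a genuine map into $\mathbb{R}$.

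\textbf{Step 1: distributional identity.} The key fact is the Galois inequality
\[
F_\theta^{-1}(v)\le y \iff v\le F_\theta(y), \qquad v\in(0,1).
\]
It follows from right-continuity and monotonicity of $F_\theta$: the set $\{y : F_\theta(y)\ge v\}$ is a closed half-line $[F_\theta^{-1}(v),\infty)$. Consequently
\[
P(h_\theta(V)\le y)=P(V\le F_\theta(y))=F_\theta(y),
\]
which gives $Y\overset{d}{=}h_\theta(V)$ for every $\theta$. This step holds for arbitrary distributions, discrete or continuous, which is exactly the generality the theorem claims.

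\textbf{Step 2: monotonicity in $v$.} This is immediate: the sets $\{y: F_\theta(y)\ge v\}$ shrink as $v$ grows, so their infimum increases.

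\textbf{Step 3: monotonicity in $\theta$.} This is the main obstacle, because for a completely arbitrary family it is false. If $F_{\theta_1}$ and $F_{\theta_2}$ cross, no coupling through a common $V$ can make $h$ increasing in $\theta$ for all $v$. I would resolve it in one of two ways.

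The first way is to read ``increasing in $\theta$'' under the paper's standing assumption that the family is stochastically increasing in $\theta$, i.e.\ $F_{\theta_1}(y)\ge F_{\theta_2}(y)$ for $\theta_1\le\theta_2$. This holds in particular for the exponential families in \eqref{eq: glm-expFamily}, via the monotone likelihood ratio in $\theta$. Then $\{y:F_{\theta_1}(y)\ge v\}\supseteq\{y:F_{\theta_2}(y)\ge v\}$, hence $h_{\theta_1}(v)\le h_{\theta_2}(v)$.

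The second way, which avoids extra assumptions, is to observe that $\mathcal{I}$ is only an index set. One can then reparametrize by any total order on $\mathcal{I}$ compatible with the stochastic order, if such an order exists. I would state explicitly that monotonicity in $\theta$ requires the family to be stochastically ordered, and verify this for the exponential family case, which is the case used later.

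For that verification, the likelihood ratio $f_{\theta_2}(y)/f_{\theta_1}(y)\propto\exp\bigl(y(\theta_2-\theta_1)/\phi\bigr)$ is increasing in $y$. Monotone likelihood ratio implies first-order stochastic dominance, which gives the required ordering of the $F_\theta$.
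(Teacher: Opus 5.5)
Your route matches the paper's: the generalized inverse $F_\theta^{-1}$, the Galois inequality $F_\theta^{-1}(u)\le y\iff u\le F_\theta(y)$, and stochastic ordering of the family to get monotonicity in $\theta$. Your remark that monotonicity in $\theta$ is impossible for an arbitrary family is correct, since it forces $F_{\theta_1}\ge F_{\theta_2}$ for $\theta_1\le\theta_2$. That is more careful than the statement as written. The paper's proof also falls back on the exponential-family structure at this point, citing a stochastic-monotonicity result, whereas your likelihood-ratio argument makes the step self-contained.

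The genuine gap is taking $G_0$ uniform on $(0,1)$. The theorem asks for a real-valued $h$ on $\mathcal{I}\times\mathbb{R}$ that is increasing in $v$ on all of $\mathbb{R}$. With a uniform latent variable this is impossible whenever $f_\theta$ has support unbounded below, e.g.\ the Gaussian family, which is the paper's main case. Indeed, if $h_\theta:\mathbb{R}\to\mathbb{R}$ is non-decreasing, then $h_\theta(v)\ge h_\theta(0)$ for all $v\in(0,1)$. Hence $h_\theta(V)\ge h_\theta(0)$ almost surely, whereas $\mathbb{P}(Y<c)>0$ for every $c\in\mathbb{R}$.

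So neither the constant nor the ``$h_\theta(1/2)+(v-1/2)$'' continuation is monotone, and your claim that a finite-valued monotone extension exists is false. The values off $(0,1)$ are irrelevant for the law of $h_\theta(V)$, but not for the monotonicity claim. They also matter in the model itself, where $h_\theta$ is evaluated at $T_d(V)$, which ranges over all of $\mathbb{R}$.

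The fix is the paper's construction. Choose $G_0$ with a continuous cdf that is strictly increasing on all of $\mathbb{R}$, e.g.\ $G_0=\Phi$, and set $h_\theta:=F_\theta^{-1}\circ G_0$. Then $G_0(v)\in(0,1)$ for every real $v$ and $G_0(V)\sim\mathcal{U}[0,1]$, so no extension is needed. Your Steps 1--3 then go through verbatim with $v$ replaced by $G_0(v)$.
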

The following Corollary provides a natural choice of $G_0$ for continuous exponential families. The proofs of Theorem~\ref{thm: nu2-inverseTrafoThm} and the Corollary can be found in Supplement~\ref{append: inverseTrafoProof}. 
\begin{cor}
    If $\{f_\theta : \theta\in\mathcal{I} \}$ is a family of continuous random variables, whose support is a single interval, we can choose $G_0=F_{\theta_0}$ for a fixed $\theta_0\in\mathcal{I}$.
\end{cor}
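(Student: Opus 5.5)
The idea is to take $G_0=F_{\theta_0}$ in the construction of Theorem~\ref{thm: nu2-inverseTrafoThm} and to define $h_\theta$ explicitly through quantile functions:
\begin{equation*}
h_\theta(v) := F_\theta^{-1}\bigl(F_{\theta_0}(v)\bigr), \qquad F_\theta^{-1}(u):=\inf\{y : F_\theta(y)\ge u\}.
\end{equation*}
We then check three things: that $G_0$ is admissible (absolutely continuous), that the representation \eqref{eq: nu2-inverseTrafoThm} holds, and that $h$ is monotone in both arguments.

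\textbf{Admissibility of $G_0$.} Here "continuous random variables" should be read as having a density, so $F_{\theta_0}$ is absolutely continuous. If only continuity of the CDF is assumed, we would note that the representation argument below needs only continuity, and flag that the absolute-continuity requirement of the Theorem is met under the density reading.

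\textbf{Representation.} Let $V\sim F_{\theta_0}$. Since $F_{\theta_0}$ is continuous, the probability integral transform gives $U:=F_{\theta_0}(V)\sim\mathrm{Unif}(0,1)$. The standard quantile-transform lemma, $F_\theta^{-1}(u)\le y \iff u\le F_\theta(y)$, then yields $F_\theta^{-1}(U)\sim F_\theta$. Hence $Y\overset{d}{=}h_\theta(V)$ for every $\theta\in\mathcal{I}$.

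\textbf{Monotonicity in $v$.} The map $v\mapsto h_\theta(v)$ is a composition of the nondecreasing maps $F_{\theta_0}$ and $F_\theta^{-1}$, so it is increasing. The single-interval support assumption sharpens this. On the interior of the support $S_{\theta_0}=(a,b)$, $F_{\theta_0}$ is strictly increasing, because a flat piece inside an interval support would contradict its being the support. Likewise $F_\theta^{-1}$ is strictly increasing on $(0,1)$. So $h_\theta$ is a strictly increasing bijection from $S_{\theta_0}$ onto $S_\theta$. This is the main reason the natural choice works cleanly: without the interval assumption, $h_\theta$ would still be valid but could have flat pieces or jumps on gaps of the support.

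\textbf{Monotonicity in $\theta$.} This is the main obstacle. We need $F_\theta^{-1}(u)$ to be nondecreasing in $\theta$ for fixed $u$. That is exactly stochastic ordering of the family, $F_{\theta_1}\ge F_{\theta_2}$ pointwise for $\theta_1\le\theta_2$, because quantile functions reverse the order of CDFs. I would handle this exactly as in the proof of Theorem~\ref{thm: nu2-inverseTrafoThm}. Presumably that proof uses the same quantile construction with a generic $G_0$ and obtains monotonicity in $\theta$ from stochastic monotonicity of the family, which holds for exponential families since $Y$ is stochastically increasing in $\theta$. Because the inner map $F_{\theta_0}$ does not depend on $\theta$, the $\theta$-monotonicity carries over unchanged. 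So the Corollary reduces to replacing the uniform or other reference variable of the Theorem by $F_{\theta_0}(V)$.
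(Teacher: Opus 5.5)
Your proposal is correct and is essentially the paper's argument. The distributional step in the Supplement's proof of the Theorem uses $\Phi$ only through $G_0(V)\sim\mathcal{U}[0,1]$, so substituting the continuous cdf $F_{\theta_0}$ yields your $h_\theta=Q_\theta\circ F_{\theta_0}$. On the support this equals $(Q_\theta\circ\Phi)\circ(Q_{\theta_0}\circ\Phi)^{-1}$, where the paper's map is invertible by its single-interval diffeomorphism lemma. Monotonicity in $\theta$ then comes from the stochastic ordering of exponential families, exactly as in the Theorem's proof.

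One detail needs tidying, because $F_{\theta_0}$, unlike $\Phi$, can take the value $0$. Under your convention $F_\theta^{-1}(0)=\inf\mathbb{R}=-\infty$, so $h_\theta(v)=-\infty$ to the left of a lower-bounded support (e.g.\ Gamma), whereas the Theorem asks for a real-valued $h$. Fix this by redefining $h_\theta$ on the $G_0$-null complement of the support, e.g.\ as the identity. This keeps monotonicity in both arguments because the support of an exponential family does not depend on $\theta$.
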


As examples recall that any exponentially distributed $Y\sim \text{Exp}(\theta)$ is obtained by the continuous transformation $Y=V/\theta$ of the standard exponential distribution $V\sim \text{Exp}(1)$, and a discretely distributed $Y\sim F_\theta$ with support $I\subseteq \mathbb{N}$ has the same distribution as $h_\theta(V)$ where 
$V$ follows the uniform distribution on $[0,1]$ and $h_\theta(v)=\min\{k\in I: v\leq F_\theta(k)\}$.     

Later (Sec.~???) we will apply normalizing flows to the latent variable $V$ of Theorem~\ref{thm: nu2-inverseTrafoThm}. In particular, this provides an elegant way to relax the distributional assumptions of discrete exponential families (like the binomial distribution) for which a direct monotone transformation of the target variable $Y$ is impossible without altering its support. 

\subsection{Contribution}\label{sec: intr-contrib}

We propose a semi-structured extension of GLMs that integrates an invertible residual neural network (i-ResNet) \cite{behrmann_invertible_2019, chen_residual_2019} into Equation~\eqref{eq: glm-muEquation} which allows a flexible but careful nonlinear extension of predictor $\eta$. We also relax assumption~\eqref{eq: glm-expFamily} using methods from the field of normalizing flows \cite{kobyzev_normalizing_2021}, utilizing a second i-ResNet applied to a latent variable representation as described in Theorem~\ref{thm: nu2-inverseTrafoThm}. 
We name the resulting model \emph{\textbf{Li}pschitz-constrained \textbf{D}eep \textbf{G}eneralized \textbf{L}inear \textbf{M}odel} (LiD-GLM):

\noindent\textbf{LiD-GLM Model:}
{\sl Our model starts with a given GLM with exponential family \eqref{eq: glm-expFamily} and regression parameters $\beta\in\mathbb{R}^k, \beta_0\in\mathbb{R}$ for the linear predictor $\eta\in\mathbb{R}$. We also assume a given representation of the corresponding exponential family via a latent variable $V$ and transformations $\{h_\theta, \theta\in\mathbb{R}\}$ (see Theorem~\ref{thm: nu2-inverseTrafoThm}).
    Let further $T_p=\text{Id}+\nu_p:\mathbb{R}^k\to\mathbb{R}^k$ and 
    $T_d=\text{Id}+\nu_d:\mathbb{R}\to\mathbb{R}$ be i-ResNets (see Supplement \ref{sec: append-iResNet-Decomp}) where ``Id'' is the respective identity function and denote by $T_{p,i}$ and $\nu_{p,i}$ the $i$-th components of the output vectors of $T_p$ respectively $\nu_p$. 
	Our model then assumes that, given a realization of covariates $\textbf{X}=\textbf{x}$: 
	\begin{equation}\label{eq: intro-T1}
        \tag{M1}
		\eta = \beta_0 + T_p(\textbf{x}) \cdot \beta  =
        \beta_0 + \sum_{i=1}^k\beta_i  T_{p,i}(\textbf{x})  =
           \beta_0 + \sum_{i=1}^k\beta_i \big(x_i+\nu_{p,i}(\textbf{x})\big)    ,
	\end{equation}
	(where ``$\cdot$'' denotes the scalar product) and:
	\begin{equation}\label{eq: intro-T2}
        \tag{M2}
		Y \sim  h_\theta(T_d(V)),
	\end{equation}
    which we illustrate in Figure \ref{fig: flowchComp}.
}

\begin{figure}
	\centering
\includegraphics[width=0.8\textwidth]{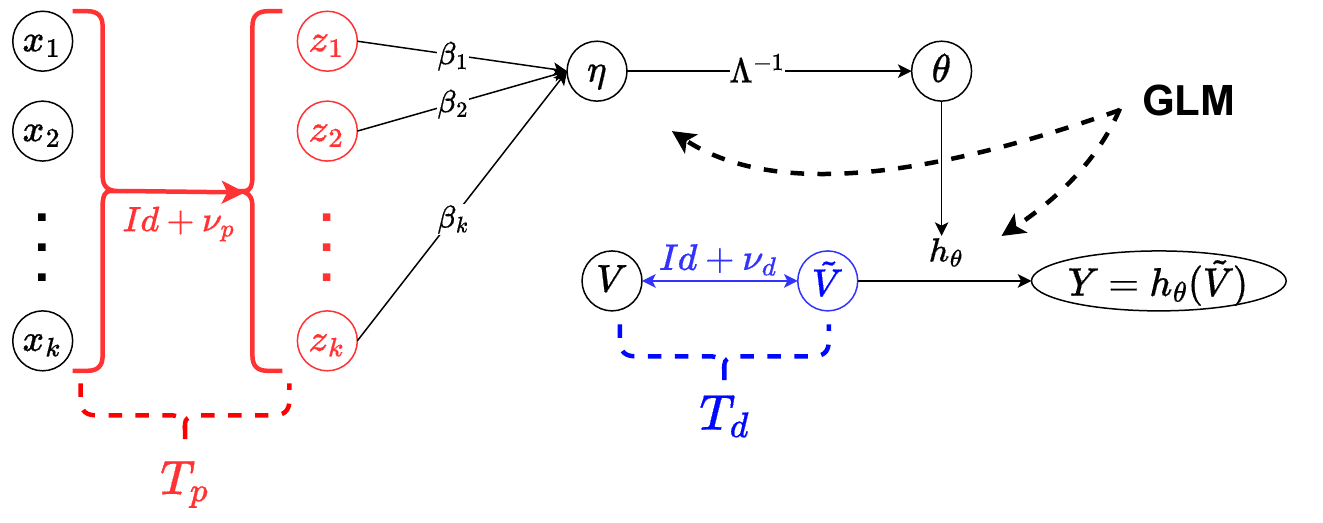}
		\caption{Flowchart of LiD-GLM. The extensions from Section~\ref{sec: nu1} are pictured in red and those of Section~\ref{sec: nu2} in blue. For $\nu_p\equiv 0$ and $\nu_d\equiv 0$, the structure of a GLM (black) is obtained.}
		\label{fig: flowExtGlm}
	\label{fig: flowchComp}
\end{figure}

Depending on the application, one can deliberately choose one of the transformations $T_p$ or $T_d$
as the identity function, thus enforcing the base GLM's assumptions on the corresponding model component. An example is the Bernoulli distribution which is fully determined by $\theta$ and thereby the transformation $T_d$ is obsolete.

An i-ResNet $T:\mathbb{R}^m\to \mathbb{R}^m$ is a concatenation of ``residual blocks'', which are invertible functions of the form $(\text{Id}+\nu):\mathbb{R}^m\to\mathbb{R}^m$, where Id is the $m$-dimensional identity function and $\nu$ a neural network (NN) with a constrained Lipschitz constant $L(\nu)\leq c, \; c<1$ (see Def. \ref{def: lip} below). These Lipschitz constraints allow a researcher to limit how far the extensions \eqref{eq: intro-T1} and \eqref{eq: intro-T2} deviate from \eqref{eq: glm-muEquation} and \eqref{eq: glm-expFamily}. An example is shown in Figure \ref{fig: nu_1-2dex}. Unlike similar methods, LiD-GLM does not require a pre-defined structure of nonlinearities or interactions.

\begin{figure}
    \centering
    \includegraphics[width=\linewidth]{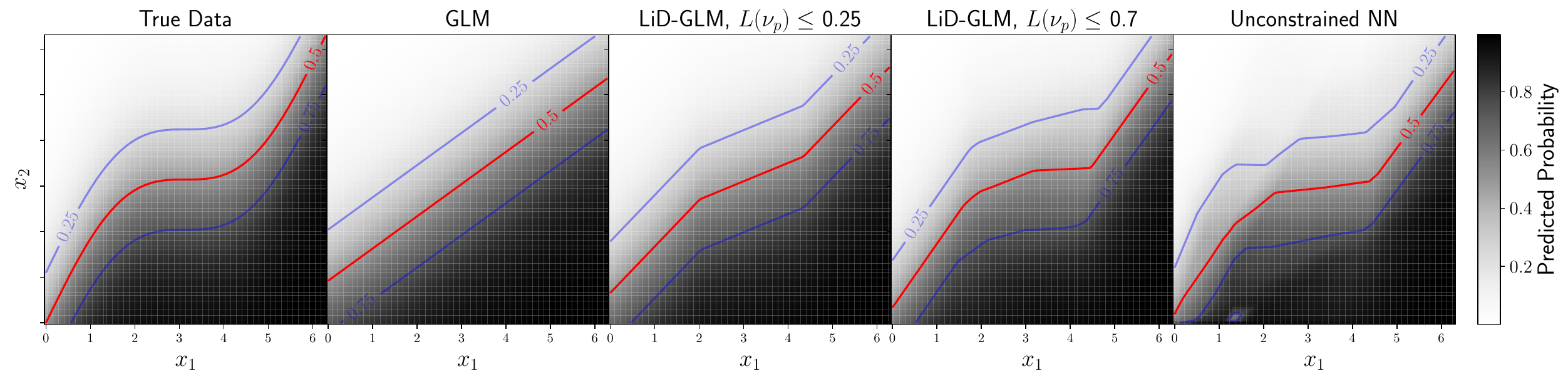}
	\caption{
    Heatmaps and contour lines ($\mathbb{P}=\frac{1}{4}, \frac{1}{2}, \frac{3}{4}$) showing learned densities in synthetic binary data example. Shown are 
    1) the true distribution $\mathbb{P}[Y=1|x_1,x_2]=\text{sigmoid}(x_1+\sin(x_1)-x_2)$, 2) logistic regression, 3+4) LiD-GLM with different bounds for $L(\nu_p)$ and 5) an unconstrained NN.}
	\label{fig: nu_1-2dex}
\end{figure}

The goal of the LiD-GLM model is to preserve the baseline of interpretability of the GLM while benefiting from the i-ResNets' flexibility. 
In particular, in \eqref{eq: intro-T2} the i-ResNet $T_d$ is invertible and $h_\theta$ is increasing in $\theta$ (see Theorem \ref{thm: nu2-inverseTrafoThm}), and therefore the stochastic monotonicity of $Y$ in $\eta$ remains a core model property, resulting in an intuitive and interpretable model behaviour. Also, the term $T_{p,i}(\textbf{x})=x_i+\nu_{p,i}(\textbf{x})$ remains to be monotone in $x_i$. However, in general, the monotonicity of $\eta$ in $x_i$ cannot be preserved, if the model is to learn complex interactions, which we discuss further in Supplement~\ref{sec: append-Monoton}.

To achieve identifiability, we adapt the ``PHO''-orthogonalization from \cite{rugamer_new_2023} and empirically verify its effectiveness through applications to real and synthetic data.
Finally, we develop interpretational methods for LiD-GLM and discuss these for applications on datasets of different complexity, comparing also model performance to comparable models.

\subsection{Related Literature}\label{sec: intro-relLit}

The concept of combining statistical parametric conditional distribution models with NNs has already been applied in existing literature for different reasons. In \cite{lim_deeply-learned_2023} and \cite{tran_bayesian_2020}, Deep-GLM models are proposed that contain a pre-transformation of a GLM's inputs by an unrestricted NN. 
In wide \& deep learning \cite{cheng_wide_2016}, a sum of predictors from a linear model and an unrestricted NN is proposed in the context of recommender systems. However, 
the interpretability of these unconstrained methods is limited, as no guarantee can be given to what degree the learned linear weights actually describe the models' behaviour.

Other models that focus more on interpretability have been constructed on the basis of generalized additive models (GAMs). In neural additive models (NAMs) \cite{agarwal_neural_2021}, the predictor can be computed from one-dimensional NN-functions of single input covariates. They can learn flexible nonlinearities while remaining relatively well interpretable, but are inflexible in learning interactions. In semi-structured deep distributional regression models (SDDR) \cite{rugamer_semi-structured_2024,rugamer_deepregression_2023}, linear predictors are combined with structured nonlinear predictors (e.g. splines) and NN predictors, each working on subsets of the input covariates. Also, SDDR models are based on generalized linear additive models for location, scale and shape (GAMLSS). However, the interpretability of SDDR models is strongly dependent on their assumed structure. Only SDDR models with few pre-specified interaction terms are interpretable, but may miss important but a priori overlooked interactions. Later we use an adapted version of an orthogonalization method developed in the context of SDDR \cite{rugamer_new_2023} and discuss the relation between SDDR and LiD-GLM in more detail in Supplement~\ref{sec: append-asSDDR}. 

All above mentioned methods
rely on low-dimensional parametric distributional assumptions. The high-dimensional distributional correction $T_d$ we propose in Equation~\eqref{eq: intro-T2} is instead closely related to conditional normalizing flows \cite{winkler_learning_2023} (see Supplement~\ref{sec: append-CondNormFlow}). The latter learn diffeomorphisms from an unknown true data distribution to a simple known distribution using invertible NNs. While normalizing flows (without any structural component) are not inherently interpretable,
the structure gained from using GLMs (and their latent space representation) in conjunction with $T_d$ results in inherent interpretability.

It has been previously shown that enforcing Lipschitz constraints on NNs can improve their robustness \cite{cisse_parseval_2017} and generalizability as well as improve performance in a \emph{small data} context \cite{gouk_regularisation_2021}, which is what our model is built for. Like \cite{behrmann_invertible_2019}, we use a fast but conservative \cite{anil_sorting_2019} layerwise upper bound on the NNs' Lipschitz constants during training; however, for a tighter bound during interpretation, we can make use of more recent exact computation methods \cite{splittgerber_exlipbab_2026} that are computationally feasible due to LiD-GLMs' relatively small size.

\subsection{Paper Outline}

The rest of this paper is structured as follows: in Section~\ref{sec: ldglm}, we will detail the proposed extensions of the traditional GLM with details on \eqref{eq: intro-T1} in Section~\ref{sec: nu1}
and details on Equation \eqref{eq: intro-T2} in Section~\ref{sec: nu2}. In Section~\ref{sec: ldglm-orthog}, we discuss a method to achieve identifiability via orthogonalization and in Section~\ref{sec: interpret} we discuss interpretation methods for LiD-GLM. In Section~\ref{sec: applications}, we then apply LiD-GLM to several datasets, comparing it to other methods, to illustrate its strengths and limitations. Finally, in Section~\ref{sec: discussion}, we set our proposed method into a broader statistical and machine learning context.

\section{Lipschitz-constrained DeepGLM (LiD-GLM)}\label{sec: ldglm}

In this section, we describe the extensions we make going from a GLM (with arbitrary family and link) to a LiD-GLM while using the notation of Section~\ref{sec: intro}.

\subsection{(i-)ResNet Input Transformation}\label{sec: nu1}
To extend \eqref{eq: glm-muEquation} to nonlinear predictions, we prepend to the GLM an i-ResNet $T_p:\mathbb{R}^k\to\mathbb{R}^k, \textbf{x}\mapsto \textbf{z}:=T_p(\textbf{x})$ operating on the input variables resulting in a GLM component which operates on a moderately ``pre-transformed'' covariate vector $\textbf{z}=T_p(\textbf{x})$:
\begin{equation}\label{eq: ldglm-nu1}
    \eta=\beta_0 + T_p(\textbf{x})\cdot \beta = \beta_0 + \big(\textbf{x} + \nu_p(\textbf{x})\big)\cdot\beta
    =\underbrace{\beta_0 + \textbf{x}\cdot\beta}_{\text{Linear predictor}}+\underbrace{\nu_p(\textbf{x})\cdot\beta}_{\text{NN term}}.
\end{equation}
This first extension to the GLM from \eqref{eq: intro-T1} is pictured on the left side of Figure~\ref{fig: flowExtGlm} and enables the extended model to learn nonlinear and interaction effects in its prediction of $\eta$. 

To guarantee a certain bound to which the linear term $\textbf{x} \cdot \beta$ dominates the predictions of the fitted LiD-GLM, we utilize the concept of Lipschitz continuity:
\begin{defi}[Lipschitz continuity]\label{def: lip}
	Given two normed spaces $(\mathcal{X},||.||_\mathcal{X}), (\mathcal{Y}, ||.||_\mathcal{Y})$, a function $f:\mathcal{X}\rightarrow \mathcal{Y}$ is called Lipschitz continuous if there exists a constant $L$ such that
	\begin{equation}
		||f(x_1)-f(x_2)||_\mathcal{Y}\leq L \cdot ||x_1-x_2||_\mathcal{X} \quad \forall x_1,x_2\in \mathcal{X}.
	\end{equation}
	With a slight abuse of nomenclature, we will refer to the \emph{Lipschitz norm}:
	\begin{equation}
		L(f):=\inf\{L\geq 0: f \text{ is } L \text{-Lipschitz}\}.
	\end{equation}
	of a Lipschitz continuous function $f$ as ``the'' Lipschitz constant of $f$, as it is shown in \cite{cobzas_lipschitz_2019} that $L(f)$ is the minimal Lipschitz constant for $f$.
\end{defi}

In an i-ResNet, the Lipschitz constant of the nonlinear term $\nu_p$ can be restricted by a user defined 
upper bound $L^b_p$. By definition, $L^b_p$ bounds the Jacobian norm of $\nu_p$ and therefore also the absolute value of the partial derivatives of $\nu_{p,i}$.
A bound $L^b_p=0$ in particular implies that $\nu_p$ equals a constant $c$ and thereby $T_p=Id+c$ and the predictor  
\eqref{eq: intro-T1} of the LiD-GLM model is linear as in a classical GLM. Moreover, the orthogonalization method from Section~\ref{sec: ldglm-orthog} below guarantees that $c=0$ in this case.
Also, if $L^b_p<1$ then the partial derivative of $T_{p,i}(\textbf{x})$ in $x_i$ is guaranteed to be positive which, as shown in 
Supplement \ref{sec: append-Monoton}, preserves the monotonicity of the transformed variable $T_{p,i}(\textbf{x})$ in $x_i$.

The bound $L^b_p$ is a controllable hyperparameter which limits the impact of $\nu_p$ and thereby the ``nonlinearity'' of \eqref{eq: intro-T1}. It can be deliberately chosen for individual practical applications. 
Noticeably, the bound $L^b_p$ does not directly limit the size of $\nu_p(\textbf{x})$ (which also depends on the support of $\textbf{x}$) but how much it changes with
changes in $\textbf{x}$ and thereby its deviation from a constant. 
The importance of $L^b_p$ is illustrated in Figure~\ref{fig: nu_1-2dex} and is further studied in Section~\ref{sec: applications}.

\subsubsection{Binary LiD-GLM Models}

For a binary target variable $Y$ the LiD-GLM model simplifies to a model for $p=\mathbb{P}(Y=1)$ of the form $p=\lambda^{-1}(\beta_0 + T_p(\textbf{x})\beta)$ where $\lambda$ is the link function of the GLM, e.g.\
the logit-function of a logistic model. As mentioned in the Section~\ref{sec: intr-contrib}, no additional distributional transformation is required and the specification of a binary LiD-GLM model is complete without the methods from the following Section~\ref{sec: nu2}.

\subsection{Distributional Correction}\label{sec: nu2}

We showed in Theorem~\ref{thm: nu2-inverseTrafoThm} that the distribution family assumption of a GLM \eqref{eq: glm-expFamily} can be rewritten as $Y\overset{d}{=}h_\theta(V)$ for a parametrized function $h_\theta$ and a continuous latent variable $V$ with a known distribution.
As the second main component of LiD-GLM, we proposed the inclusion of an i-ResNet $T_d$, i.e. $Y \sim  h_\theta(T_d(V))$ in Equation~\eqref{eq: intro-T2}, which corresponds to a $\theta$-independent correction of the latent variable distribution through a normalizing flow.

The effect of this transformation is illustrated in Figure \ref{fig: skewEx} for toy data.
We show in Supplement~\ref{append: inverseTrafoProof} that an $h_\theta$ can always be found such that \ref{eq: intro-T2} is compatible with backpropagation and can be used to generate new samples and calculate the density after training.

\begin{figure}[h]
\centering
\includegraphics[width=0.5\textwidth]{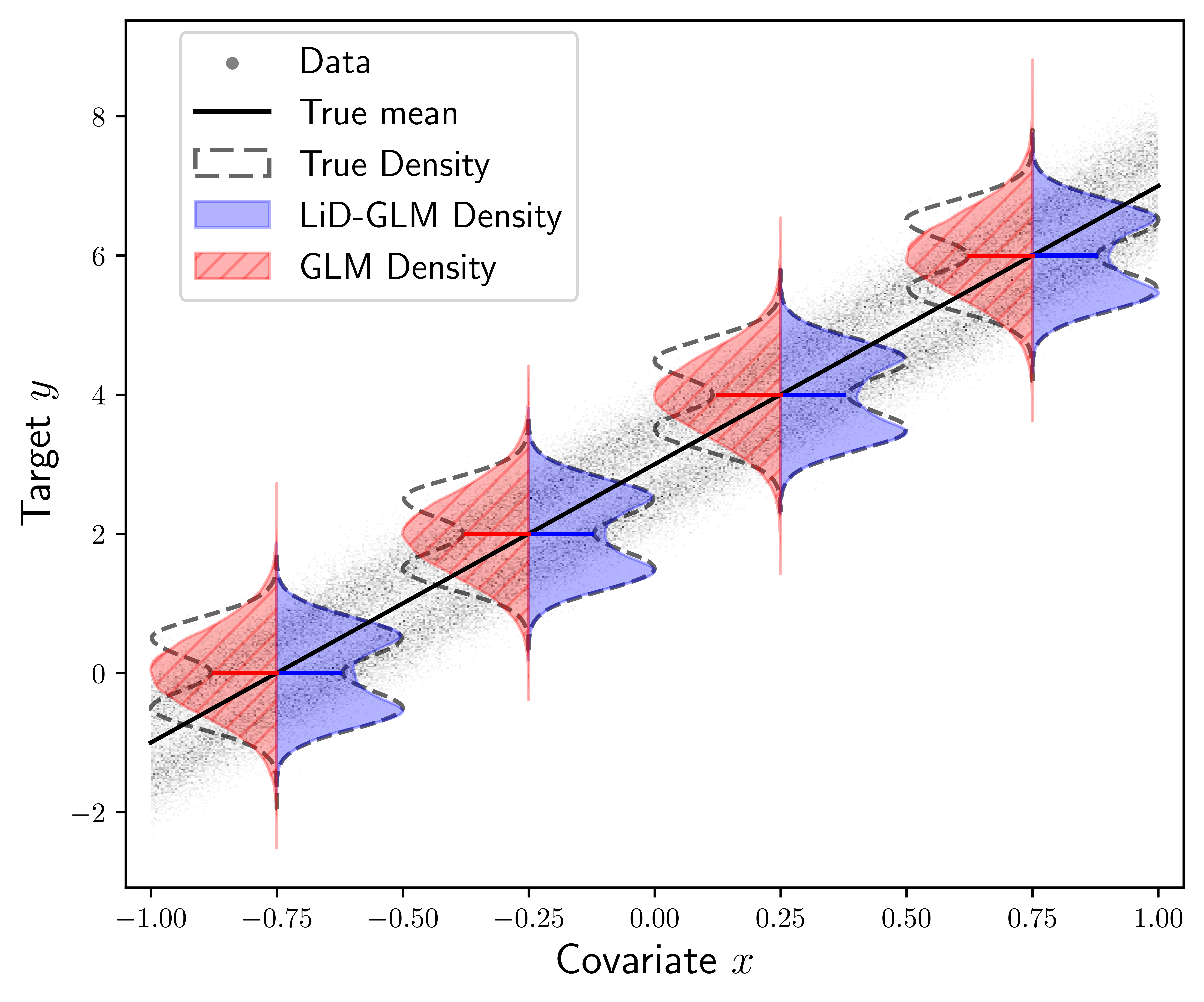}

	\caption{Example of distributional correction on synthetic multimodal data. Both GLM and LiD-GLM are able to learn the true conditional mean but the LiD-GLM also learns the approximate shape of the true conditional distribution which the GLM cannot.}

	\label{fig: skewEx}
\end{figure}

\subsubsection{Special Case: Normal Distributed Target}\label{sec: nu2-LM}

If we start with a normal linear regression model $Y=\eta+\epsilon$ we can use $V=\epsilon \sim\mathcal{N}(0, \sigma^2)$ and, because in this case $\theta=\eta$ holds, we can write $h_\eta(v) = v+\eta$ (the inverse of the centering function) to obtain $Y\overset{d}{=}h_\eta(V)$.  
The distributional correction from \eqref{eq: intro-T2} then becomes:
\begin{equation}\label{eq: nu2-LM-Td}
	Y = \eta + T_d(V),
\end{equation}
which corresponds to a correction of the original residual distribution through $T_d$.
Again, the bound $L_d^b$ on $L(T_d)$ governs the strength of this correction, which is illustrated in Figure \ref{fig: skewedEx2}. For $L_d^b=0$, the model remains with a normally distributed target.

As $T_d$ is mean-independent, \eqref{eq: intro-T2} still assumes a mean-independent residual distribution as was previously mentioned. This assumption could be weakened by using a higher-dimensional i-ResNet $T_d:\mathbb{R}^2\to\mathbb{R}^2,(\epsilon,\eta)\to (\tilde{\epsilon}, \eta)$.

\begin{figure}
	\centering
    \includegraphics[width=\textwidth]{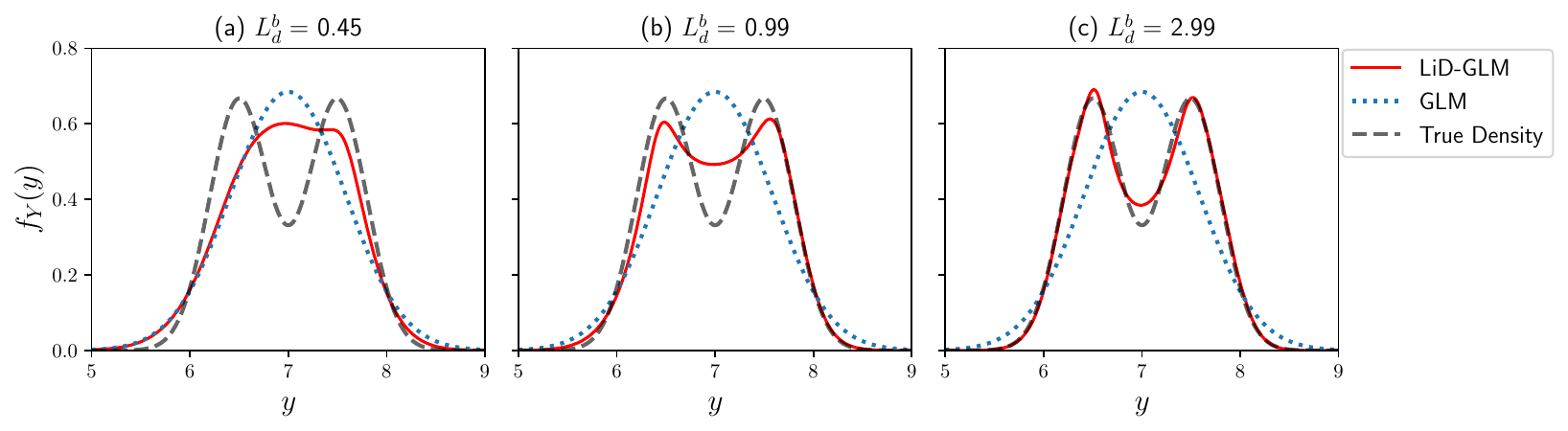}
	
	\caption{ Distributional correction learned by LiD-GLM for different $L^b_d$ on data as in Figure~\ref{fig: skewEx} shown for a fixed covariate $x=1$. 
    For increasing $L_d^b$, the conditional density of the LiD-GLM deviates more strongly from the base GLM and approximates the true density increasingly well.}
	\label{fig: skewedEx2}
\end{figure}

\subsection{Orthogonalization, Identifiability}\label{sec: ldglm-orthog}
If \eqref{eq: intro-T1} is rewritten as $\eta = \beta_0 + \textbf{x}\cdot\beta+\nu_p(\textbf{x})\cdot\beta$ as in \eqref{eq: ldglm-nu1}, an issue of identifiability becomes apparent: any non-trivial $\nu_p$ can absorb or cancel out the linear effects learned in $\beta$. This issue and its detrimental effect on model interpretability are discussed for SDDR models in \cite{rugamer_deepregression_2023,rugamer_semi-structured_2024,rugamer_new_2023} and we can build on their results.

It should however also be mentioned that LiD-GLM have some inherent protection against issues of identifiability as the Lipschitz constraint placed on $\nu_p$ and the usage of $\beta$ in both the linear and nonlinear terms prevent a free shifting of linear effects into $\nu_p$.
As we also initialize $\beta$ with the weights of a trained GLM and $T_p$ at the identity function, the gradient descent algorithm has to actively shift weight into the nonlinear component. We empirically show that both properties reduce issues of identifiability in Section~\ref{sec: appl-orth}.

However, to achieve full identifiability, we adapt post hoc orthogonalization (PHO) proposed in \cite{rugamer_new_2023} to component \eqref{eq: intro-T1} of LiD-GLM. As this requires a lengthy computation, we refer to Supplement~\ref{append: proofOrthog} for details. The main idea is to compute the projection of the output of $\nu_p$ into the column space of the training data $\textbf{X}$. This projection identifies the part of $\nu_p$ which can be described as a linear function of $\textbf{X}$ and $\beta$ can be adjusted accordingly. Also, the projection is subtracted from $\nu_p$. In effect, this method results again in a model of the form $\eta = \widetilde{\beta_0} + (\textbf{x} + \widetilde{\nu_p}(\textbf{x}))\widetilde{\beta}$ with the same predictions as the original model but with the property that if $\textbf{x}_i$ and $\widetilde{\nu}_{p, j}(\textbf{X})$ are arbitrary columns of \textbf{X} respectively $\widetilde{\nu_p}(\textbf{X})$, then $\textbf{x}_i \cdot \widetilde{\nu}_{p, j}(\textbf{X})= 0$, i.e.\ the columns are orthogonal. Each $\widetilde{\nu}_{p, j}(\textbf{X})$ is also orthogonal to the constant column $(1, \dots,1)^T$.

\section{Interpretation Methods}\label{sec: interpret}
In this section, we describe methods and guidelines for interpreting our model, describing model-specific methods and going beyond generic post-hoc interpretability methods such as partial dependence plots.
As a prerequisite, we apply the orthogonalization method from Section~\ref{sec: ldglm-orthog}, but for a simpler notation we still use the symbols $(\beta_0,\beta, \nu_p)$ instead of $(\widetilde{\beta_0}, \widetilde{\beta}, \widetilde{\nu_p})$ in this section.

\subsection{Interpreting $T_p$}\label{sec: interpret-Tp}
In the interpretation of a traditional GLM, the effect of any covariate $x_i$ on $\eta$ is given by $\beta_i$, independently of all other covariates. In our model \eqref{eq: intro-T1}, this relationship holds between the transformed covariates $z_i= T_{p,i}(\textbf{x})=x_i + \nu_{p,i}(\textbf{x})$ and $\eta$. The relationship between $x_i$ and $\eta$ is now more complex and not necessarily monotonous. However, the common coefficient $\beta_i$ for the linear and nonlinear term implies that both terms act on a common scale, and the smaller the Lipschitz constant of the nonlinear term, the more $\beta_i$ alone describes the influence of $x_i$ on the target variable.

As mentioned before, in the important special case $L(\nu_p)<1$ we can at least guarantee that each transformed covariate $z_i$ is strictly monotonically increasing in $x_i$. This even follows already if the Lipschitz norm of the $i$-th component $\nu_{p,i}(\textbf{x})$ is below $1$. The latter is bounded by the Lipschitz norm of the vector-wise function $\nu_p$ but can be smaller. The exact value by which it is bounded also depends on the chosen vector norms. Note that to obtain tight bounds, both the multidimensional and unidimensional Lipschitz constants can be computed post-hoc with exact computation methods like \cite{bhowmick_lipbab_2021} or \cite{splittgerber_exlipbab_2026}.

\subsubsection{Coefficient of Determination $R^2_i$}\label{sec: interpret-Tp-R2}

In this section, we present an alternative and more natural way to quantify how much the transformed variable $z_i=x_i+\nu_{p,i}(\textbf{x})$ is dominated by $x_i$. This then also quantifies how far $\beta_i$
still describes the effect of $x_i$ on $Y$. In this section, we denote by $\textbf{z}_i$ and $\textbf{x}_i \in \mathbb{R}^{n\times1}$ the $i$-th column of the transformed and untransformed design matrix respectively and by $\nu_{p,i}(\textbf{X})$ that of the linear correction term, meaning $\textbf{z}_i  = \textbf{x}_i + \nu_{p,i}(\textbf{X})$. Because $\nu_p$ was orthogonalized w.r.t.\ the constant one-vector $\iota_n=(1, \dots,1)^T$ during orthogonalization, the mean of each column is zero: $\overline{\nu_{p,i}(\textbf{X)}}=\iota_n^T \nu_{p,i}(\textbf{X)} =0$.
Therefore the means $\overline{\textbf{z}_i}  = \overline{\textbf{x}_i }$ are equal.

Using this, one can decompose the total variance of $\textbf{z}_i$, computed via the well-known sum of squares $||\textbf{z}_i - \overline{\textbf{z}_i}||_2^2$ as follows:
\begin{align*}
    ||\textbf{z}_i - \overline{\textbf{z}_i}\iota_n||_2^2 \quad \overset{\overline{\textbf{z}_i}  = \overline{\textbf{x}_i }}&{=} \quad ||\textbf{x}_i + \nu_{p,i}(\textbf{X}) - \overline{\textbf{x}_i}\iota_n||_2^2 
    \quad \overset{\nu_{p,i}\perp \iota_n, \textbf{x}_i}{=} \quad ||\textbf{x}_i - \overline{\textbf{x}_i}\iota_n||_2^2 + ||\nu_{p,i}(\textbf{X})||_2^2
\end{align*}
The coefficient of determination is then given as the fraction variance explained by the term $\textbf{x}_i$:
\begin{equation}
    R_i^2 := \frac{||\textbf{x}_i - \overline{\textbf{x}_i}\iota_n||_2^2}{||\textbf{z}_i - \overline{\textbf{z}_i}\iota_n||_2^2} .
\end{equation}
One can also see that 
because of PHO, $\textbf{x}_i$ is the ordinary-least-squares prediction from the linear regression $\textbf{z}_i\sim\textbf{x}_i$. This ties our $R_i^2$ back to its traditional definition.

\subsection{Interpreting $T_d$}\label{sec: interpr-T_d}
The i-ResNet $T_d$ is an invertible one-dimensional real-valued function and one can easily see that it is strictly increasing. Because the function $h_\theta$ from \eqref{eq: intro-T2} is also increasing in both arguments, it preserves this monotonicity in the latent variable which makes an interpretation of $T_d$ relatively straightforward.
In particular, plotting $T_d$ provides important insights into the fitted LiD-GLM, e.g.\ when compared to a diagonal line representing the identity. If the distribution $G_0$ for the latent variable is symmetric, an asymmetric deviation from the identity will indicate a transformation towards a skewed distribution, see Figures~\ref{fig: interpret-TdEx} (a.1, a.2).  We can also see whether an initial unimodal distribution $G_0$
(e.g.\ a Gaussian distribution) is transformed into a multimodal distribution, which can indicate a heterogeneity that is not covered by the covariates. An example is shown in Figure~\ref{fig: interpret-TdEx} (b.1, b.2).

\begin{figure}
	\centering
	\includegraphics[width=\textwidth]{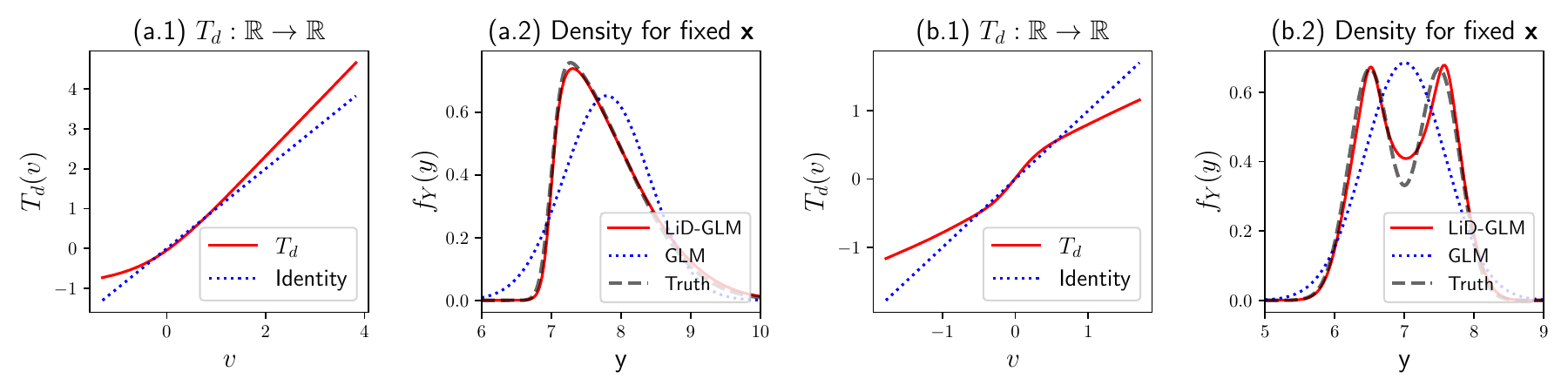}
	\caption{$T_d$ in two synthetic examples as in Figure~\ref{fig: skewEx} where the conditional distribution was now either skewed (Figs.~(a.1, a.2)) or multimodal (Figs. (b.1, b.2)). Shown are the transformations $T_d$ learned by LiD-GLM in both cases (Figs.~(a.1, b.1)) and the resulting conditional distributions (Figs.~(a.2, b.2)). An arbitrary fixed covariate was chosen for all plots as $T_d$ is mean-independent. One can see that, unlike the GLM, LiD-GLM is able to learn skewed and multimodal distributions with an (unimodal, symmetric) normally distributed latent variable.}
	\label{fig: interpret-TdEx}
\end{figure}

\section{Practical Applications}\label{sec: applications}
In this section, we use a nested cross-validation (see Supplement \ref{append: appl-nestedCross}) to compare the performance of LiD-GLM to other models on different datasets and we illustrate the previously introduced interpretation methods. The datasets were chosen to represent a variety of realistic settings, with the ``AKI'' data being relatively linear, ``Cars'' being small and moderately nonlinear and ``Stroke'' relatively large and clearly nonlinear. The datasets also cover both the binary and the continuous case. The code for all experiments can be found under \fbox{Github-Link}.

\subsection{``Cars'' Application}\label{sec: applications-cars}
The ``Cars'' dataset \cite{bohanec_car_1988} describes a car's fuel consumption (numeric, mpg) dependent on several independent variables. From these, we selected ``cylinders'' (ordinal), ``horsepower'' (numeric), ``acceleration'' (numeric), ``model\_year'' (numeric) and ``origin'' (nominal, dummy-encoded) and normalized all non-binary covariates. 
We also removed entries with missing values (6 total) as well as entries with either three or five cylinders, because these classes are very small (4 or 3 entries respectively).
In total, 13 entries were removed of the 398 originally contained in the dataset.

\begin{table}[t]
	\centering
	\small
	\tabcolsep=0.11cm
	\begin{tabular}{c|r|r|r|r|r|r|r|r|r}
		\toprule
        $k$ & $L_p^b$& $||.||_q$  & lr ($10^x$) & wid. $T_p$& dep. $T_p$ &blk. $T_p$  & act.\ fkt.\ $T_p$ & $\beta$ frozen & test NLL\\
		\midrule
        \midrule
        $\mathcal{H}$ & 
        {\renewcommand{\arraystretch}{0.5}\begin{tabular}{@{}r@{}} 0.01- \\ 1.0 \end{tabular}} &
        1, 2 & -5, -3 & 
        {\renewcommand{\arraystretch}{0.5}\begin{tabular}{@{}r@{}} 9,18, \\ 36 \end{tabular}} &
        {\renewcommand{\arraystretch}{0.5}\begin{tabular}{@{}r@{}} 1, 2, \\ 3 \end{tabular} }&
        1,2 &
        {\renewcommand{\arraystretch}{0.5}\begin{tabular}{@{}r@{}} GroupSort, \\ ReLU \end{tabular}} &
        {\renewcommand{\arraystretch}{0.5}\begin{tabular}{@{}r@{}} yes, \\ no \end{tabular} }& \\   
        \midrule
        \midrule
		
		0  & 1.0 & 1 & $-3$ & 9  & 3 & 1 & GroupSort & no &  2.347\\
		1  & 1.0 & 2 & $-3$ & 18 & 3 & 1 & GroupSort & no & 2.368 \\
		2  & 0.9 & 1 & $-3$ & 18 & 3 & 1 & GroupSort & no & 2.584 \\
		3  & 0.9 & 2 & $-3$ & 9 & 1 & 1 & GroupSort & yes &  2.489 \\
		4  & 1.0 & 2 & $-3$ & 9 & 2 & 1 & GroupSort & no & 2.635\\

		\bottomrule
	\end{tabular}
	\captionof{table}{Optimal hyperparameters for the ``Cars'' data found for LiD-GLM without $T_d$ on each split $k$. Hyperparamters were chosen from the range (for $L^b_p$) and the values given in row $\mathcal{H}$.  
    Listed are the bound $L_p^b$ of $L(\nu_p)$, the used $p$-norm ($||.||_p$), the learning rate (lr), the width (wid.) and number (dep.) of layers in $T_p$, number of residual blocks in $T_p$ (blk.), whether or not the linear coefficients $\beta$ were frozen during training, activation function used (act.\ fkt.) and test negative loglikelihood (test NLL).}
	\label{tab: cars-topConfigs_no_T_d}
\end{table}

\subsubsection{Performance Comparison}\label{sec: applications-cars-perform}
We show in Figure \ref{fig: appl-cars_perf-loglike} the results of the nested cross-validation for LiD-GLMs with and without distributional correction $T_d$, a traditional GLM and a simple SDDR model \cite{rugamer_semi-structured_2024}, where the identity link and a Normal target distribution were chosen for all models.
As can be seen, the nonlinear LiD-GLM achieves better performance than the GLM model, but also varies more between splits. The performance is slightly improved further by including a distributional correction $T_d$. The more highly nonlinear SDDR model performs best on average, but only by a narrow margin, and its performance fluctuates much more strongly between splits than that of the other models. 

In Tabs.~\ref{tab: cars-topConfigs_no_T_d} and \ref{tab: cars-topConfigs_with_t_d} the optimal hyperparameters found in each split are listed for LiD-GLMs with, respectively without, $T_d$. In both cases, $T_p$ was generally chosen to be a small network with a moderately large bound $L_p^b$ on the Lipschitz constant $L(\nu_p)$ between 0.8 and 1.0 with one residual block in Table~\ref{tab: cars-topConfigs_no_T_d} and occasionally two blocks in Table~\ref{tab: cars-topConfigs_with_t_d}. The preferred activation function for $T_p$ in both cases was GroupSort, whereas the optimal activation for $T_d$ is not clear from Table~\ref{tab: cars-topConfigs_with_t_d}. There seems to be a slight advantage to not freezing the linear weights during network training, but the choice of vector norm and the size of $T_d$ seemed to have little effect on model performance.

\begin{figure}[t]
	\centering
	\begin{subfigure}[t]{0.45\textwidth}
		\includegraphics[height=9cm]{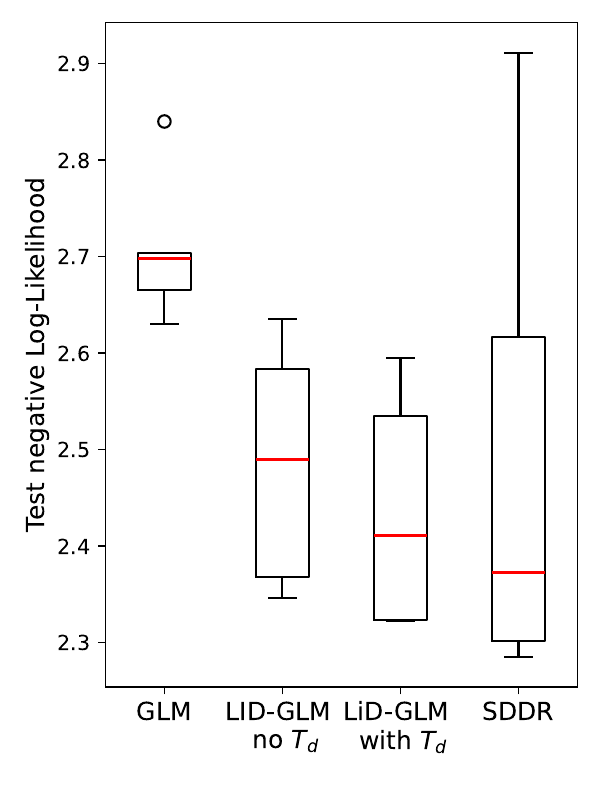}
		\caption{Comparison of test mean negative Log-Likelihoods in the ``Cars'' data example for different models with optimized hyperparameters over $k=5$ splits.}
		\label{fig: appl-cars_perf-loglike}
	\end{subfigure}	
    \hspace{1 cm}
    \begin{subfigure}[t]{0.45\textwidth}
    	\includegraphics[height=9cm]{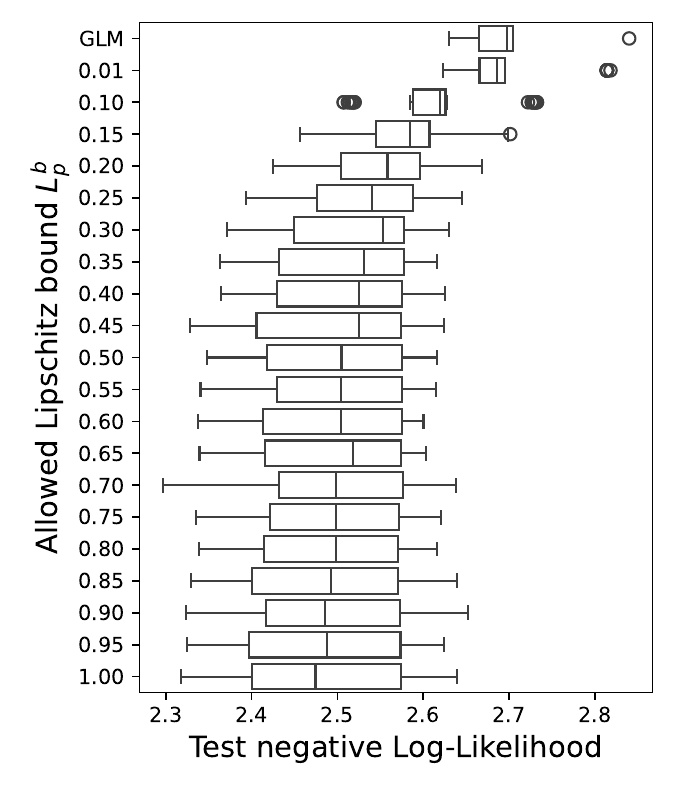}
    	\caption{Boxplots visualizing the impact of the allowed Lipschitz constant on model performance pooled over all splits and the top 10 model hyperparameter combinations.}
    	\label{fig: appl-cars_perf-lipDep}
    \end{subfigure}
	\caption{Results for the nested cross validation on the ``Cars'' data..}
\end{figure}

\begin{table}
	\centering
	\small
	\tabcolsep=0.11cm
	\begin{tabular}{c|r|r|r|r|r|r|r|r|r}
		\toprule
		$k$ & $L_p^b$ & wid. $T_p$ & dep. $T_p$ & blk. $T_p$ & act. $T_p$ & wid. $T_d$ & dep. $T_d$ & act. $T_d$ & test NLL\\
		\midrule
        \midrule
        $\mathcal{H}$ & 
        {\renewcommand{\arraystretch}{0.5}\begin{tabular}{@{}r@{}} 0.01- \\ 1.0 \end{tabular}} &
        9, 18 & 2, 3 & 1, 2 & 
        {\renewcommand{\arraystretch}{0.5}\begin{tabular}{@{}r@{}} GroupSort, \\ ReLU \end{tabular}} &
        4, 8 & 2, 3 &
        {\renewcommand{\arraystretch}{0.5}\begin{tabular}{@{}r@{}} GroupSort, \\ ReLU \end{tabular}} & \\
        \midrule
        \midrule
        
		0  & 0.89 & 9  & 2 & 1 & ReLU & 8 & 3 & ReLU & 2.323 \\
		1  & 0.78 & 18 & 3 & 1 & GroupSort & 8 & 3 & GroupSort & 2.322 \\
		2  & 1.00 & 9  & 2 & 2 & GroupSort & 4 & 2 & ReLU & 2.535\\
		3  & 1.00 & 9  & 2 & 2 & GroupSort & 4 & 3 &  GroupSort & 2.411\\
		4  & 1.00 & 9  & 2 & 1 & GroupSort & 8 & 3 & ReLU & 2.595 \\

		\bottomrule
	\end{tabular}
	\captionof{table}{The best hyperparameters found for LiD-GLM with $T_d$ on each split $k$ in the cars data example. Used were $||.||_2$, lr = $10^{-3}$, no frozen weights, $L_d^b= 0.99$, blk. $T_d=1$.}
	\label{tab: cars-topConfigs_with_t_d}
\end{table}

To further study the implications of the bound on $L(\nu_p)$, we show in Figure~\ref{fig: appl-cars_perf-lipDep} the test negative mean loglikelihood (NLL) of LiD-GLM (without $T_d$) from the nested cross-validation in dependence on the bound $L_p^b$. Like one would expect, the LiD-GLMs' performance is generally increasing with increasing $L_p^b$, as the model goes from a GLM (listed for reference, $L_p^b = 0$) to an unconstrained NN ($L_p^b\to\infty)$ (see also~\cite[Sec. 5.7]{gouk_regularisation_2021}). One can however observe diminishing returns: the improvement in test NLL for smaller Lipschitz constants is considerably steeper than for larger constants. 

This data therefore appears well suited to be modelled by LiD-GLM. A moderate nonlinearity causes measurable performance benefit over the GLM and diminishing returns already for small $L_p^b$ in Figure~\ref{fig: appl-cars_perf-lipDep} mean that a LiD-GLM which preserves interpretability well is also similar in performance to an unconstrained model.

\subsubsection{Model Interpretation}

We illustrate our interpretation methods for a single LiD-GLM model fit on the entire cars dataset (using an 80/20 train/validation-split to enable early stopping). 

For $T_p$, we choose one residual block with three hidden layers, each with a width of 12, the GroupSort(2) activation function and set $L_p^b= 0.99$. For $T_d$, we use one residual block with three hidden layers of width six, $L_d^b=0.99$ and the ReLU activation. We also do not freeze the linear coefficients $\beta$ during training. This results in a validation NLL of 2.43.

As a first step of interpretation, we visualize the marginalized dependence of the predicted mean on single covariates via partial dependence plots (PDPs). We show PDPs for most covariates in Figure \ref{fig: applications-cars-allPDPs}. The variable ``horsepower'' is shown in Figure \ref{fig: appl-cars-horsepowerPDP}, where we also stratify by the levels of ``cylinders'' and compare the PDP for the LiD-GLM to those of the linear model and unconstrained NN. 

\begin{figure}[t]
	\begin{subfigure}[b]{0.32\textwidth}
		\caption{Categorical PDPs}
		\centering
		\includegraphics[width=\textwidth]{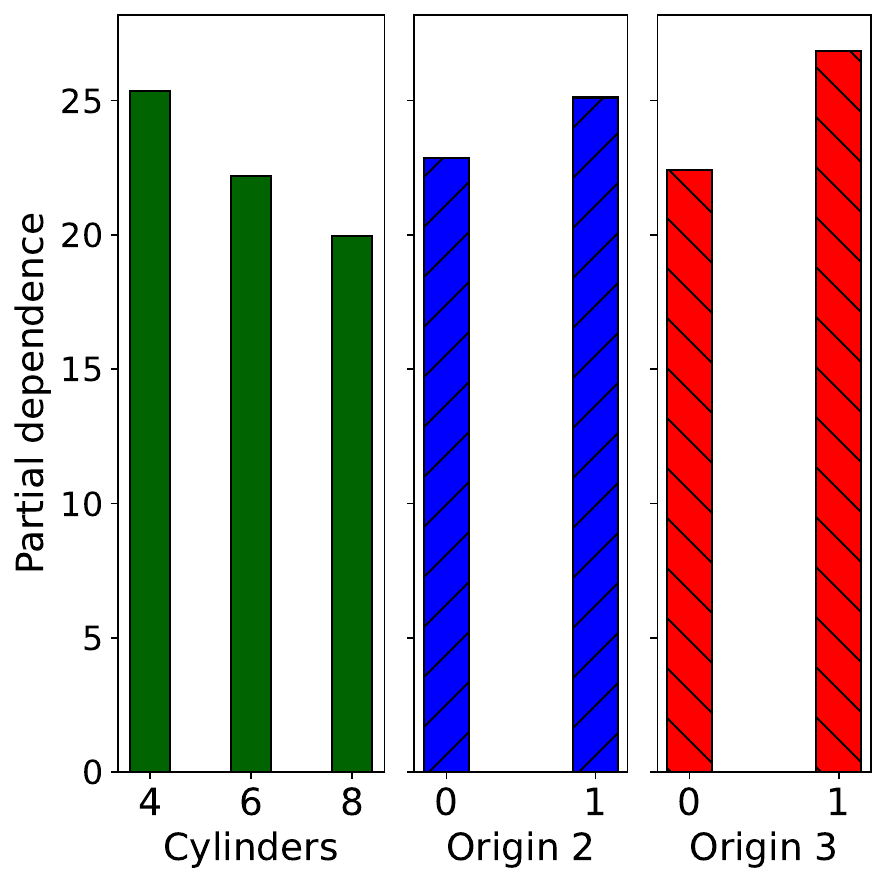}
	\end{subfigure}
	\begin{subfigure}[b]{0.33\textwidth}
		\caption{PDP of acceleration}
		\centering
		\includegraphics[width=\textwidth]{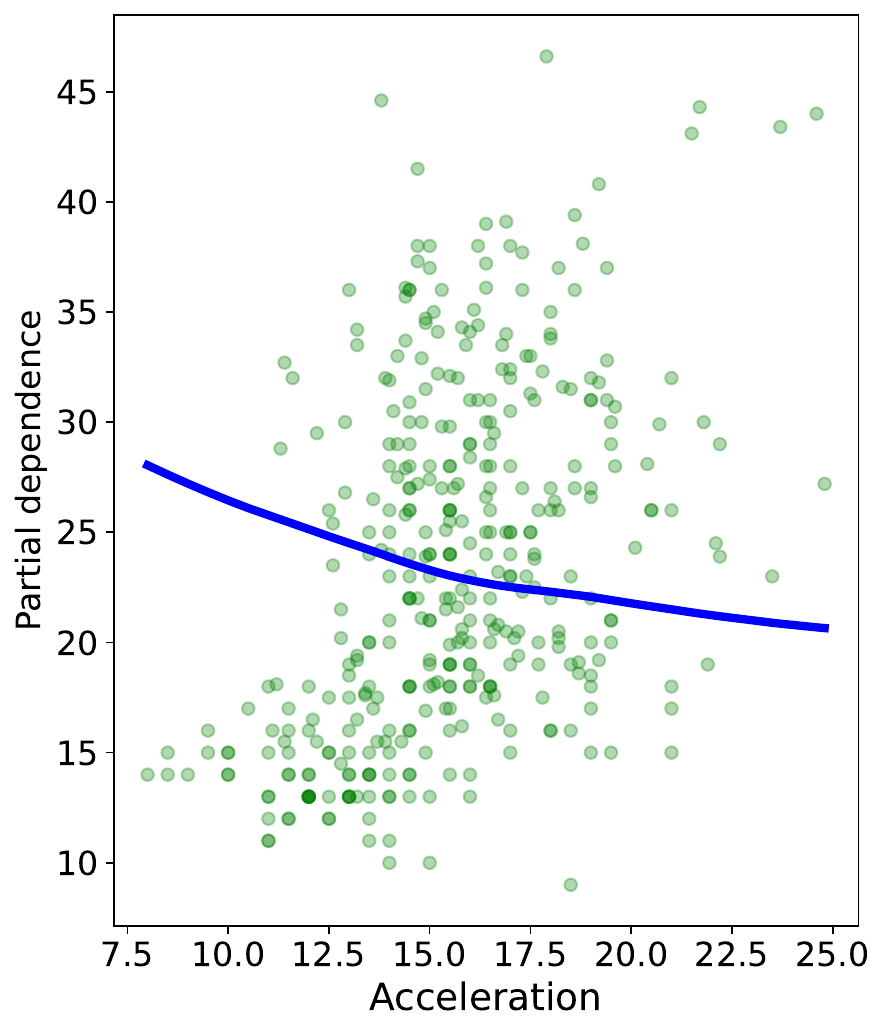}
	\end{subfigure}
	\begin{subfigure}[b]{0.33\textwidth}
		\caption{PDP of model\_year}
		\centering
		\includegraphics[width=\textwidth]{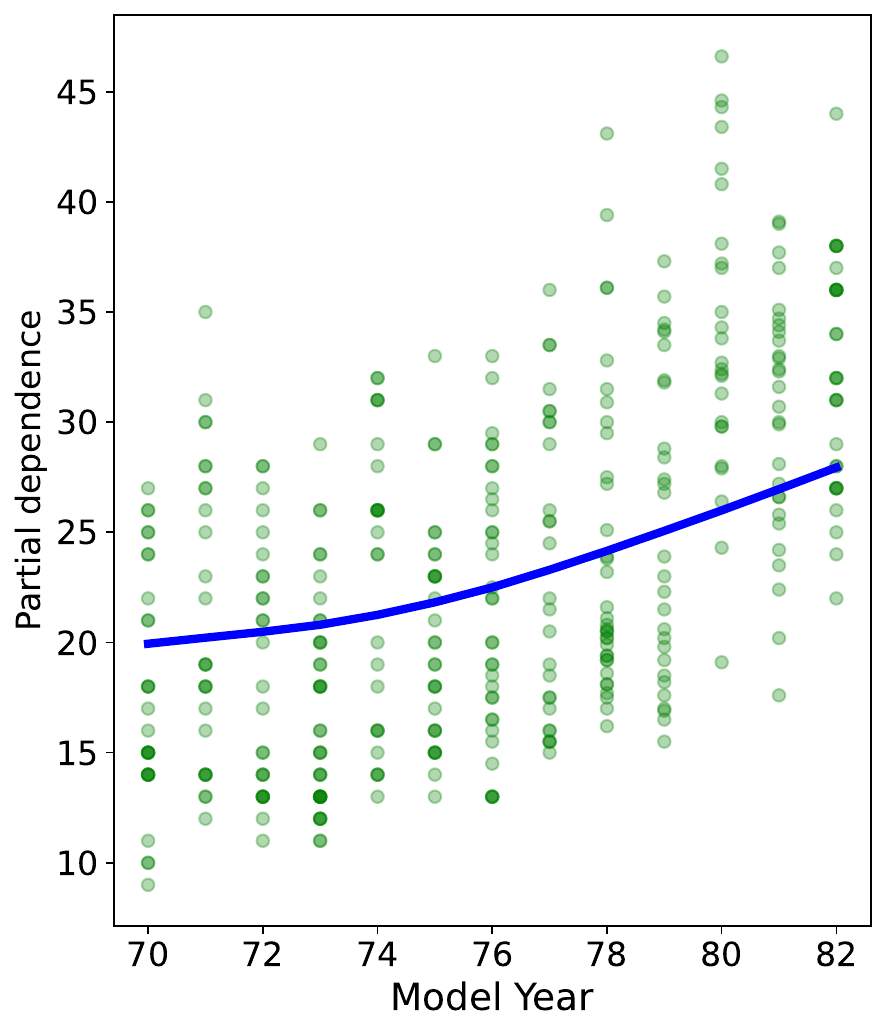}
	\end{subfigure}
	\caption{PDPs for a LiD-GLM trained on the cars data set. For numeric covariates, a scatterplot shows the training data.}
	\label{fig: applications-cars-allPDPs}
\end{figure}
\begin{figure}[t]
	\centering
	\includegraphics[width=\textwidth]{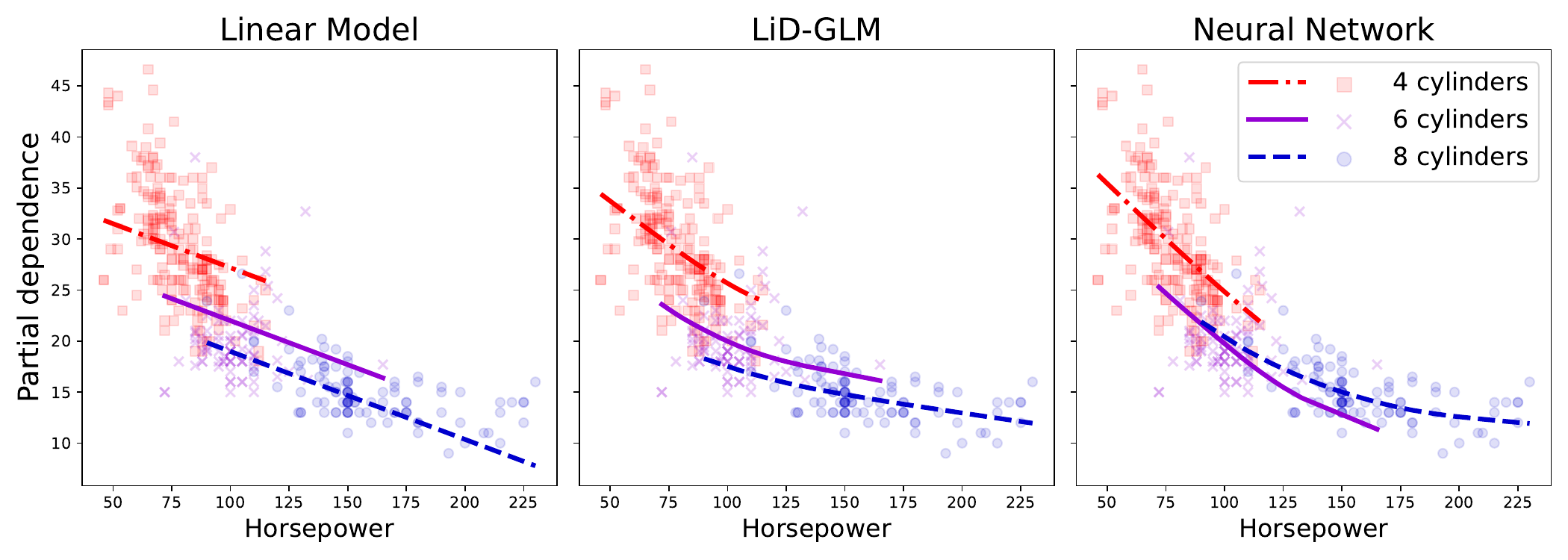}
	\caption{PDPs of ``horsepower'' for different models trained on the cars dataset, each stratified by number of cylinders. A scatterplot shows the training data.}
	\label{fig: appl-cars-horsepowerPDP}
\end{figure}

As can be seen from the PDPs, the LiD-GLM learns a certain degree of nonlinearity seen in the data. In Figure \ref{fig: appl-cars-horsepowerPDP}, we see that the LiD-GLM and NN learn besides a nonlinear trend also the interaction between horsepower and cylinders that cannot be learned with the (additive) GLM. However, the stronger interaction learned with NN appears somewhat suspicious as it implies that cars with eight cylinders are more fuel efficient than cars with six cylinders. 

\begin{table}[t]
\centering
	\begin{tabular}{lrrrrrrr}
		\toprule
        &bias&cylinders&horsepower&acceleration&model year & origin 1 & origin 2\\
        \midrule
        LM & 22.51 &-2.46 & -3.29 & -1.41 & 2.47 & 2.13 & 4.24 \\
        LiD-GLM & 22.05 & -3.12 & -2.08 & -1.15 & 2.54 & 2.87 & 4.86\\
        $R^2$ & - &  0.99 & 0.97 & 0.98 & 0.99 & 0.96 & 0.95 \\
        \bottomrule

    \end{tabular}
    \caption{Coefficient table for LiD-GLM trained on the cars data. Shown are the coefficients of the Linear Model and the LiD-GLM (including PHO) and the coefficients of determination $R^2$.}
	\label{tab: appl-cars-coeffTable}
\end{table}

In Table~\ref{tab: appl-cars-coeffTable}, we list the coefficients $\beta$ learned by the initial linear model
and 
the LiD-GLM (including PHO), which are relatively similar even though $\beta$ was not frozen during the training of LiD-GLM.
We also list the $R_i^2$ as described in Section~\ref{sec: interpret-Tp},
which are all very close to one. This indicates that the transformed variables are close the original covariates and hence the $\beta_i$ can largely be interpreted as in the GLM. The smaller $R_i^2$ of "acceleration" and "horsepower" seem to match the stronger nonlinearities in the correspondiong PDPs; the smaller $R^2$ of the origin variables could well be caused by learned interaction effects.

As the network contained in $T_p$ is relatively small and only contains one residual block, we can also quantify the influence of $\nu_p$ more precisely. Methods exist \cite{bhowmick_lipbab_2021,splittgerber_exlipbab_2026} to exactly compute the Lipschitz constant of piecewise linear NNs. Applying the ExLipBaB algorithm results in an exact global Lipschitz constant of $L(\nu_p)=0.31$, whereas the bound $L^b_p$ was $0.99$. This means that $T_p$ did not exhaust its allowed Lipschitz constant and the ``majority'' of the model lies in the linear part. This is in line with behavior in Figure \ref{fig: appl-cars_perf-lipDep} - the model did not need to exhaust the Lipschitz constant for an adequate fit and higher allowed constants therefore yield little benefit.

\begin{figure}[t]
	\begin{subfigure}[t]{0.48\textwidth}
		\centering
		\includegraphics[width=\textwidth]{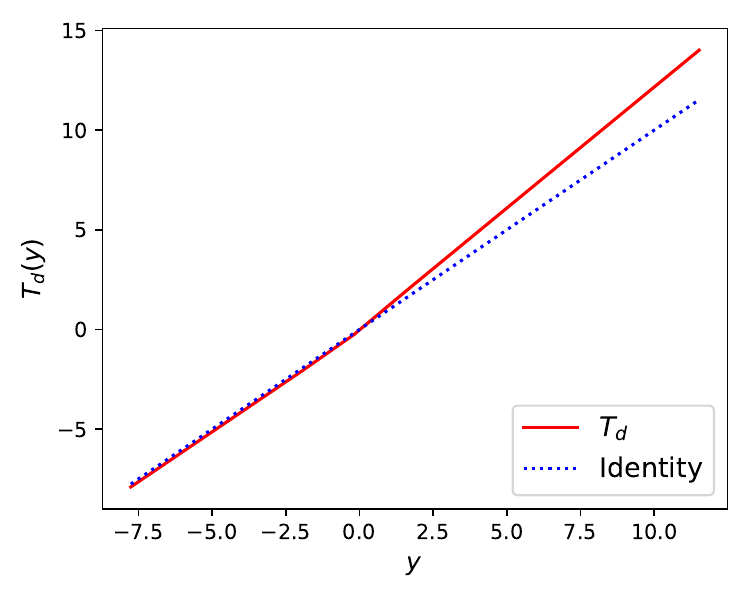}
		\subcaption{$T_d$ as a function $\mathbb{R}\to\mathbb{R}$}
		\label{fig: applications-cars-distrCorrect-asFctn}
	\end{subfigure}
	\begin{subfigure}[t]{0.48\textwidth}
		\centering
		\includegraphics[width=\textwidth]{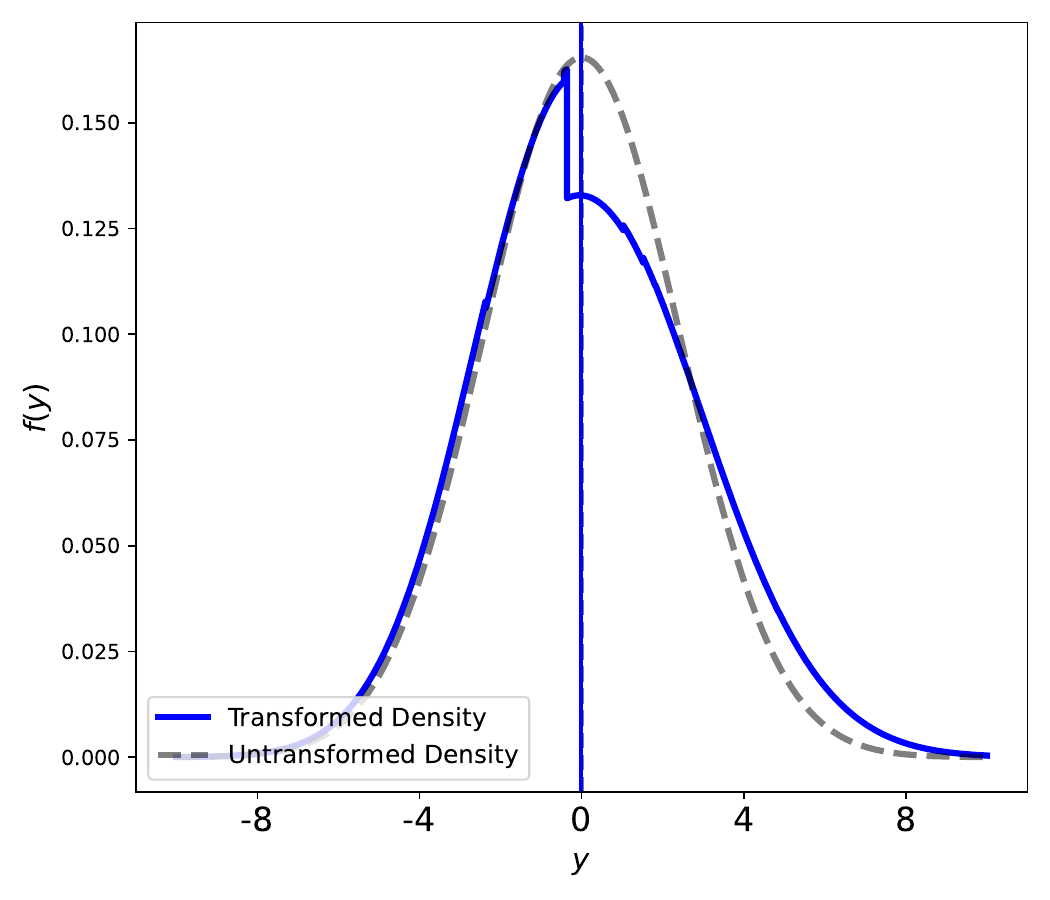}
		\subcaption{Residual distribution before/after transformation with $T_d$. (Overlapping) Vertical lines show the mean of each distribution.}
		\label{fig: applications-cars-distrCorrect-density}
	\end{subfigure}
	\caption{Distribution correction learned by LiD-GLM on the cars dataset.}
	\label{fig: applications-cars-distrCorrect}
\end{figure}

To understand the learned residual distribution, we plot $T_d$ in Figure~\ref{fig: applications-cars-distrCorrect-asFctn} and the resulting residual distribution in Figure~\ref{fig: applications-cars-distrCorrect-density} as described in Section~\ref{sec: interpr-T_d}. 

From both plots, it can be concluded that the LiD-GLM learned a left-skewness of the residual distribution. 
As can also be seen in Figure \ref{fig: applications-cars-distrCorrect-density}, the mean
of the residual distribution remains to be close to zero under $T_d$. 

\subsection{``Stroke'' Application}\label{sec: applications-stroke}

For an example of a larger and more complex dataset, we use the ``Stroke'' dataset \cite{stroke_data}. In it, the occurrence of a stroke in a patient is provided as a binary target variable together with basic patient information. We dropped three entries with NA values and 58 entries with negative ``age''-values, after which the dataset had 40849 entries. We also dropped the single categorical covariate ``work\_type'', which has some dubious entries. After this, we were working on nine covariates, and again normalized the non-binary covariates.

\subsubsection{Performance Comparison}\label{sec: applications-stroke-perform}
Again, we compare a LiD-GLM, a traditional GLM and an SDDR model \cite{rugamer_semi-structured_2024}, where we this time choose the logit link and a Bernoulli target distribution for all models. On this data, we also compare to a basic NN with a Sigmoid last layer, which, in the binary special case, outputs a probability (-distribution) and is equivalent to an unconstrained Deep GLM. Note that the distributional correction is not applicable in the binary setting. 

We show the search space of a nested cross-validation and optimal hyperparameters for LiD-GLM in Table~\ref{tab: stroke-topConfigs_large_Lip} and the performance comparison between the models in Figure~\ref{fig: appl-stroke_perf-loglike}, where we also separately show the best LiD-GLM for each split if $\beta$ is frozen during training, or if a small $L_p^b=3.29$ is chosen. The latter two options result in a worsened goodness-of-fit, but can be advantageous for model interpretation.

In Figure~\ref{fig: appl-stroke_perf-loglike-LipDep}, we show the test NLL in dependence of the bound $L_p^b$. In this data example, 
one sees diminishing returns only for relatively large bounds. A bound $L_p^b\approx 14$ seems to be optimal, which is further supported by Table~\ref{tab: stroke-topConfigs_large_Lip}, which also shows a preference towards larger networks with more residual blocks than in Section~\ref{sec: applications-cars-perform}.
A LiD-GLM with a relatively small bound, e.g. $L_p^b = 1.16$, however also achieves significant performance benefit over regular logistic regression. Still, this emphasises the compromise inherent to LiD-GLM: a researcher can choose a higher $L^b_p$ to improve performance or intentionally use a more interpretable but less expressive model.

\begin{figure}[t]
	\centering
	\begin{subfigure}[t]{0.45\textwidth}
        \centering
    	\includegraphics[width=\textwidth]{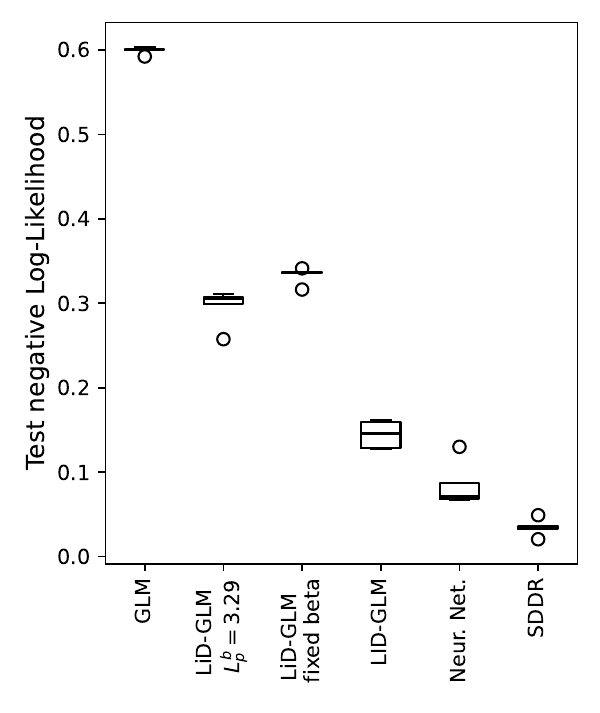}
    	\subcaption{Comparison of test negative Log-Likelihoods in the ``Stroke'' data example for different models with optimized hyperparameters over $k=5$ splits.}
    	\label{fig: appl-stroke_perf-loglike}
	\end{subfigure}
	\hfill
	\begin{subfigure}[t]{0.45\textwidth}
		\centering
		\includegraphics[width=\textwidth]{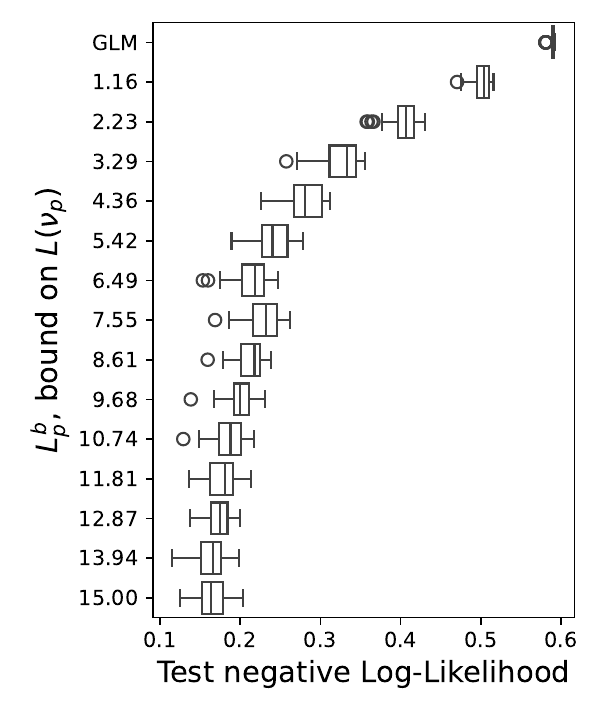}
		\subcaption{Boxplots visualizing the impact of the allowed Lipschitz constant on model performance pooled over all splits and the top 10 model hyperparameter combinations.}
		\label{fig: appl-stroke_perf-loglike-LipDep}
	\end{subfigure}
	\caption{Results for the nested cross validation on the ``Stroke'' data.}
	\label{fig: appl-stroke_perf}
\end{figure}

\begin{table}
    \begin{center}
	\begin{tabular}{c|r|r|r|r|r|r|r|r|r|r|r}
		\toprule
		$k$ & $L_p^b$ & $||.||_q$ & lr ($10^x$) & wid. $T_p$ & dep. $T_p$ &  blk. $T_p$ & act. fkt. $T_p$  & $\beta$ frozen & test NLL\\
		\midrule
        \midrule
        $\mathcal{H}$ & 
        {\renewcommand{\arraystretch}{0.5}\begin{tabular}{@{}r@{}} 0.1- \\ 15.0 \end{tabular}} &
        1,2 & -5, -3 & 
        {\renewcommand{\arraystretch}{0.5}\begin{tabular}{@{}r@{}} 18, 36, \\ 48 \end{tabular}} &
        {\renewcommand{\arraystretch}{0.5}\begin{tabular}{@{}r@{}} 3, 4, \\ 5, 6 \end{tabular}} &
        
        {\renewcommand{\arraystretch}{0.5}\begin{tabular}{@{}r@{}} 3, 4, \\ 5 \end{tabular}} &
        {\renewcommand{\arraystretch}{0.5}\begin{tabular}{@{}r@{}} ReLU \\ GroupSort \end{tabular}} &
        yes, no &\\
        \midrule
        \midrule
		0  & 15.00 & 2 & -3 & 36 & 6 & 5 & GroupSort & no & 0.162 \\
		1  & 15.00 & 2 & -3 & 36 & 6 & 4 & GroupSort & no & 0.127 \\
		2  & 13.94 & 2 & -3 & 48 & 5 & 5 & GroupSort & no & 0.159 \\
		3  & 12.87 & 2 & -3 & 48 & 6 & 4 & GroupSort & no & 0.146 \\
		4  & 13.94 & 2 & -3 & 36 & 5 & 5 & GroupSort & no & 0.129\\        
		\bottomrule
	\end{tabular}
	\caption{Optimal hyperparameters for LiD-GLM found for each split $k$ in the ''Stroke'' data example. For a further table description, see Table~\ref{tab: cars-topConfigs_no_T_d}.}
	\label{tab: stroke-topConfigs_large_Lip}
    \end{center}
\end{table}

\subsubsection{Model Interpretation}\label{sec: applications-stroke-interpr}

In Table~\ref{tab: appl-stroke-coeffTable}, we again show the coefficients  of a LiD-GLM model with 6 hidden layers of width 48, distributed into 5 residual blocks with $L^b_p=13.99$. To help with interpretation, we froze the linear coefficients during training and the model therefore only achieved a validation NLL of 0.369 on a single split (80/20 train/validation) of the stroke data (results for an unconstrained model are given in Supplement \ref{suppl: addRes-stroke}). As can be seen from the listed $R^2_i$-values, the model incurs more nonlinearity than in ``Cars'' application, which is to be expected from Figure \ref{fig: appl-stroke_perf-loglike-LipDep}. The covariates ``age'' and ``avg. glucose level'' seem to remain relatively unchanged, which matches the PDP plots shown in Figure \ref{fig: appl-stroke-pdp}. 

In contrast, the PDP plot for ``BMI'' is relatively nonlinear, despite what the $R^2_i$ which is exactly 1 (rounded to two digits) would suggest. To explain this, we show in Figure \ref{fig: appl-stroke-pdpTransfVars} the PDP plots of the transformed formely binary covariates w.r.t.\ the ``BMI'' variable. As can be seen, ``BMI'' seems to have a nonlinear influence on all binary covariates. These learned interactions likely also explain the relatively small $R^2_i$-values of other covariates. Especially in the cases of ``hypertension'', ``heart disease'' and ``ever married'', in which the PDP plot does not even intersect the average value of the transformed covariate, strong interactions are likely occurring. In this way the LiD-GLM can also be leveraged to detect possible interaction effects for downstream variable selection.

\begin{table}[t]
\centering
	\begin{tabular}{lrrrrrrrrrr}
		\toprule
        & $\beta_0$ & $\beta_1$ &  $\beta_2$ & $\beta_3$ & $\beta_4$ & $\beta_5$ & $\beta_6$ & $\beta_7$ & $\beta_8$ & $\beta_9$ \\
        \midrule
        LM & -1.0 & -0.41 & 0.09 & 1.23 & 1.13 & 0.87 & 0.13 & 0.42 & -0.16 & 0.13 \\
        LiD-GLM & -2.06 & -0.54 & 0.16 & 1.8 & 1.64 & 1.6 & 0.34 & 0.61 & -0.46 & 0.09\\
        $R^2_i$ & - & 0.96 & 0.99 & 0.64 & 0.54 & 0.79 & 0.98 & 0.98 & 1.0 & 0.85 \\

        \bottomrule

    \end{tabular}
    \caption{Coefficient table for LiD-GLM trained on the Stroke data. Shown are the coefficients of the Linear Model and the LiD-GLM (including PHO) and the coefficients of determination $R^2$ for the variables $\beta_0$=Intercept, $\beta_1$=Sex, $\beta_2$=Age, $\beta_3$=Hypertension, $\beta_4$=Heart disease, $\beta_5$=ever married, $\beta_6$=Residence type, $\beta_7$= Average glucose level, $\beta_8$=BMI, $\beta_9$=smoking status.}
	\label{tab: appl-stroke-coeffTable}
\end{table}

\begin{figure}
	\centering
	\includegraphics[width=\textwidth]{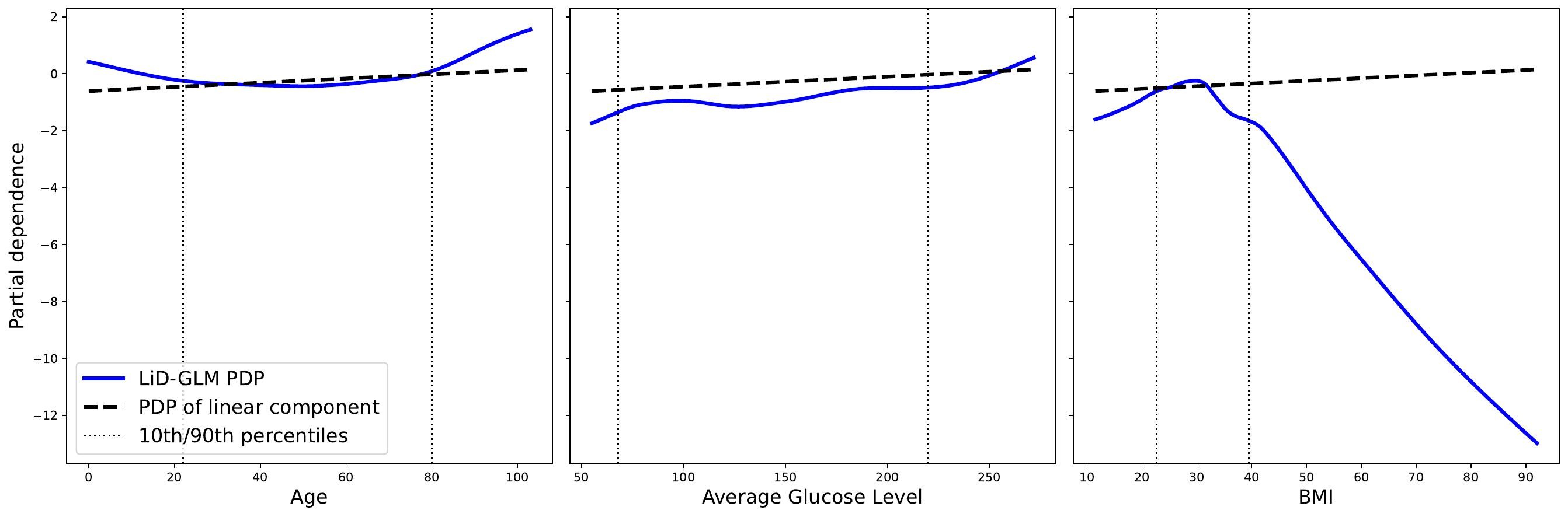}
	\caption{PDP plots of LiD-GLM on the continuous variables in stroke data example w.r.t. predicted logits.}
	\label{fig: appl-stroke-pdp}
\end{figure}

\begin{figure}[t]
	\centering
	\includegraphics[width=0.9\textwidth]{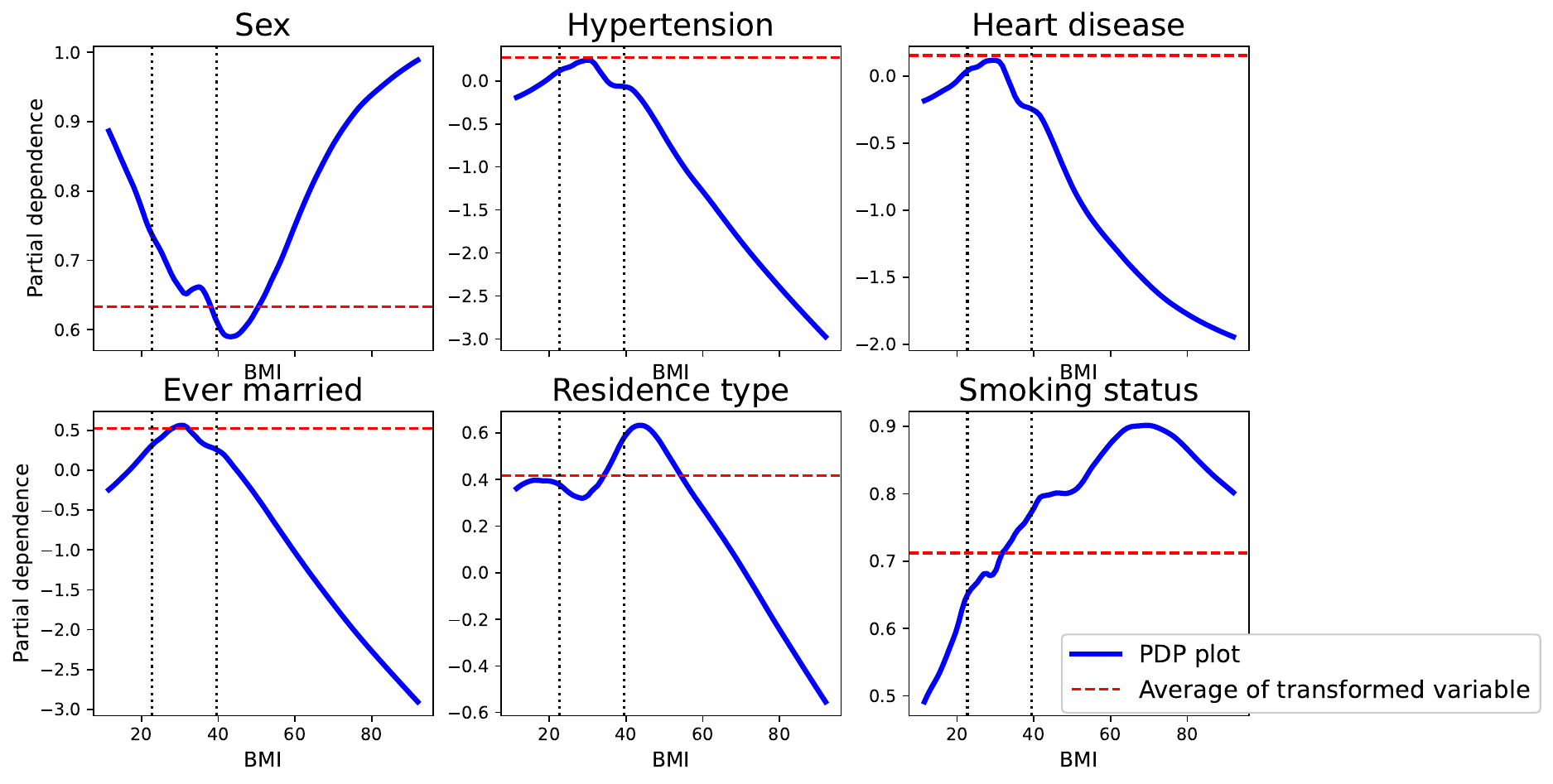}
	\caption{PDP plots of transformation $T_p$ w.r.t.\ ``BMI'' of the former binary covariates.}
    \label{fig: appl-stroke-pdpTransfVars}
\end{figure}

\subsection{PHO Orthogonalization}\label{sec: appl-orth}
We discussed in Section~\ref{sec: ldglm-orthog} how post-hoc orthogonalization (PHO) can counteract a lack of identifiability theoretically. In previous research \cite{rugamer_new_2023}, it was shown that orthogonilzation can also drastically reduce the mean squared error (MSE) when reconstructing linear effects from simple synthetic data. We have adopted their approach to also test the behavior of our model first on synthetic and then on real data. Generally, one would expect that improved identifiability should also stabilize parameter estimation. 

\subsubsection{Synthetic Linear Data}
For different numbers $p\in\{1,3,10\}$ of independent variables, we sample a number ${n\in\{50, 100, 1000\}}$ of values $\textbf{X}$ from a $p$-dimensional multivariate normal distribution. We then choose $p$ equidistant coefficients $\beta$ in the interval $[-2.5, 2.5]$ and compute a target array through a linear combination y= $\textbf{x}\cdot \beta$. We also add standard Normal noise to the target. On this synthetic data, we train a Linear Regression model as well as an SDDR and LiD-GLM model and compare in Figure \ref{fig: appl-orth-linEff} the MSEs between the estimated and true linear coefficients from 20 different random seeds. We test the SDDR model and LiD-GLM both with and without PHO; for the SDDR model we also test ONO-orthogonalization \cite{rugamer_semi-structured_2024}. For the SDDR and LiD-GLM, we use networks with the ReLU activation; for the SDDR model with two hidden layers - one with 100 and one with 50 nodes and a dropout of 0.1 after each layer. The LiD-GLM model has two hidden layers with 75 nodes each. As described in previous sections, our recommended workflow would almost always be to initialize LiD-GLM with the weights from a pre-trained GLM. However, to enable further comparison, we also show results for a LiD-GLM that was initialized with random weights. 

The Linear model is, unsurprisingly, able to reconstruct the true linear coefficients with negligible error in all tested scenarios. In our recommended workflow, the LiD-GLM is therefore essentially initialized on the correct coefficients and performs with negligible error as well. We argue that this is not in itself an unfair comparison - indeed, scenarios where the GLM already reflects the general trend of the data well are precisely what our model is constructed for. 
However, given that an SDDR model could also be initialized at a pre-trained GLM, we also compare the two models both initialized on random weights. Without orthogonalization, both models then incur significant error, especially for small $p$ and $n$, likely because of a lack of identifiability. However, a clear gap in performance is recognizable between the two models on this specific task. The constraint enforced on $L(\nu_p)$ in the LiD-GLM presumably already counteracts the lack of identifiability to some degree by encouraging the optimizer to rely on the linear component of the model rather than the nonlinear one. For both models, orthogonalization then generally improves the resulting MSE, which is the same conclusion that was drawn in \cite{rugamer_new_2023}. 

\begin{figure}[t]
	\centering
	\includegraphics[width=\textwidth]{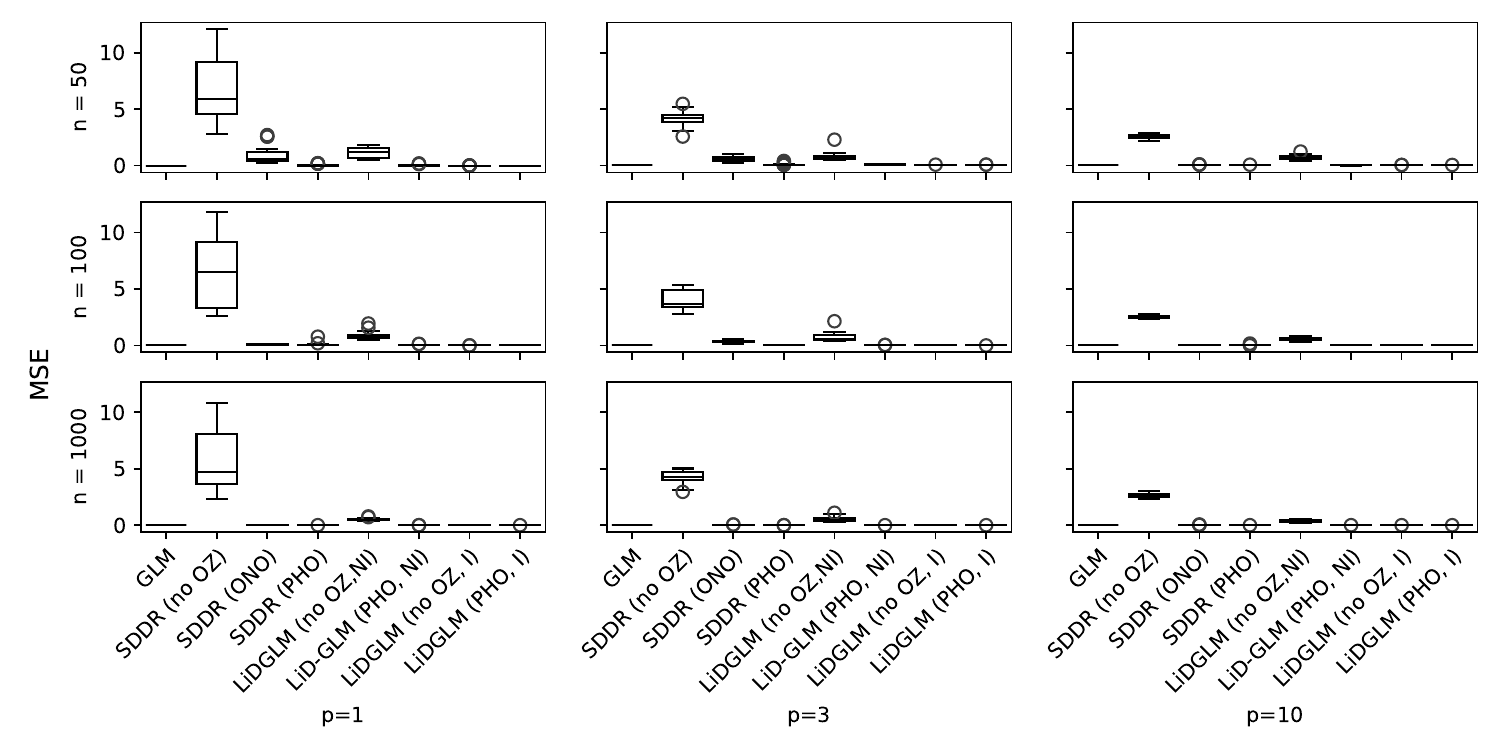}
	\caption{MSE of coefficients learned on synthetic data compared to known truth for: linear model (GLM), SDDR model without orthogonalization (no OZ), with ONO and PHO orthogonalization and LiD-GLM with and without PHO, initialized at linear model (I) and initialized at random weights (NI).}
	\label{fig: appl-orth-linEff}
\end{figure}

\subsubsection{Real Data}

To our knowledge, a discussion on the effect of orthogonalization on the estimated coefficients for SSDR-like models, as in the last subsection, has not yet been had on real data. Obviously, one generally cannot compare an absolute error to some known true effects in a real setting. However, one would expect that orthogonalization, by ensuring identifiability, at least reduces the variance between linear coefficients estimated in different local minima of the optimizer. This would indeed already improve interpretability as a large fluctuation in the linear coefficients for different optimizer paths would put into question any interpretation based on these coefficients.

To test the effect of PHO on the coefficients estimated by a LiD-GLM model in a realistic application, we compare the coefficients with and without PHO learned by the top 20 models (based on validation log-likelihood) in the nested cross-validation of Section~\ref{sec: applications-cars}, since the ``Cars'' data is an ideal application for a LiD-GLM. We show here results for LiD-GLMs without a distributional correction $T_d$, but results for LiD-GLMs with $T_d$ and results for the less relevant ``Stroke'' application from Section~\ref{sec: applications-stroke} can be found in Supplement \ref{suppl: addRes-Pho}. 

In our comparison, we consider each data split separately. This separates fluctuation in the linear effects' estimation due to randomness in the data from (potentially avoidable) fluctuations that arise from a lack of identifiablity. By considering the top 20 models for each split, we get a realistic spectrum of hyperparameters that could have been chosen in practice.

In Figure \ref{fig: appl-cars_perf-PHO_vis}, we show boxplots of the resulting estimated parameters. For conciseness, only the first three splits are shown, which are representative of the general behaviour; results for the other splits can also be found in Supplement \ref{suppl: addRes-Pho}.
In the cars application, the variance in the estimation of all $\beta_i$ is drastically reduced through an application of PHO for all coefficients. Also, the application of PHO generally reduces the absolute coefficient values.

\begin{figure}[t]
    \includegraphics[width=\textwidth]{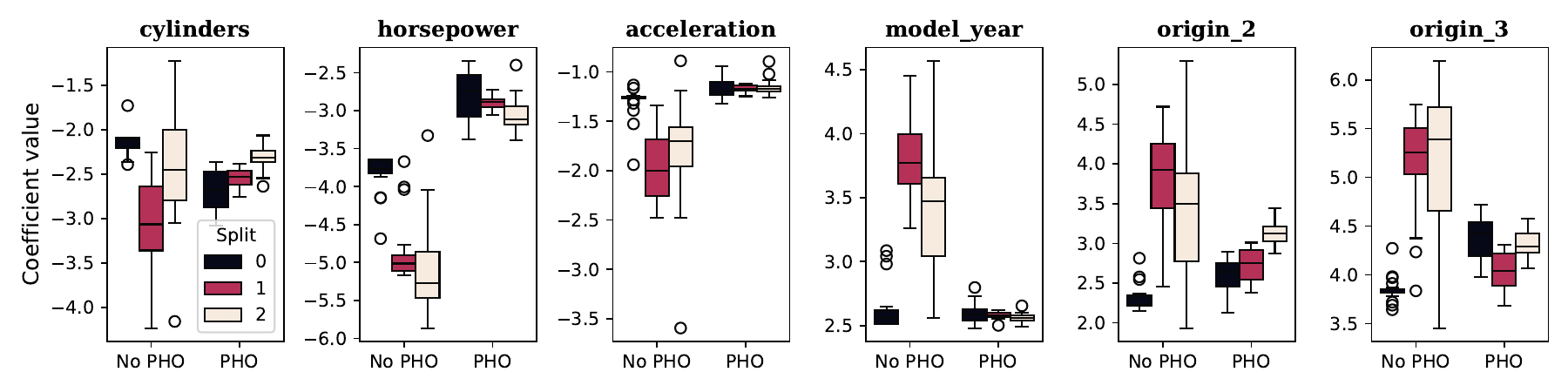}
	\caption{Boxplots showing the LiD-GLM estimates of the linear coefficients  with and without PHO on the first three splits in the nested cross validation on the ``Cars'' dataset.}
	\label{fig: appl-cars_perf-PHO_vis}
\end{figure}

\section{Discussion}\label{sec: discussion}

We introduced an extension of GLMs which addresses two key issues by allowing a flexible nonlinear parameter estimation and also a correction of the a priori assumed distribution family. LiD-GLM achieves both generalizations by utilizing i-ResNets; consequently, the deviation from the GLM, and thereby the degree of nonlinearity of the model, can be rigorously bounded through a user-chosen Lipschitz constraint. This preserves the interpretation of the linear regression coefficients up to a degree that is influenced by the actual Lipschitz constants of the nonlinear transformation applied to the individual terms of the model; it also partially preserves monotonicity. These Lipschitz constants are approximated during training and can be computed exactly post-hoc e.g.\ utilizing methods suggested in \cite{splittgerber_exlipbab_2026}. To also give a more intuitive method to quantify the degree of nonlinearity of individual covariates by utilizing coefficients of determination. This provides further insight into the interpretability of the learned linear coefficients. Furthermore, we discuss identifiability as an important part of interpretability and use a post-hoc orthogonalization to guarantee identifiability. Additionally, we show how the distributional correction learned by LiD-GLM can be easily interpreted.

We showed that on data sets of moderate nonlinearity and non-additivity our models can close the gap between the restrictive GLMs and over-expressive and unconstrained ML methods. We also tested our model internally on a highly linear real clinical dataset. Unfortunately we cannot publish these results for data protection reasons, but we would like to mention that in this case the LiD-GLM almost exactly replicated the GLM. This is not particularly surprising, as LiD-GLM was again initialized with a pre-trained GLM and simply stayed at the same weight, like in Figure \ref{fig: appl-orth-linEff}. It however gives researchers a straightforward tool to check whether a moderate nonlinearization of a plausible GLM would improve model fit. If the answer is no, like in this particular case, then a researcher has additional justification to stick to a GLM. In comparison, a model which is randomly initialized, like a NN usually is, could also be performing badly due to bad initialization. We also observed a better stability of LiD-GLM than for NNs, like was the case in Figure \ref{fig: appl-orth-linEff}.

It is important to note that the Lipschitz constant, by definition, depends on the scale of the input. We therefore expect normalization, which is a standard practice in statistics and machine learning, to be of special importance in the context of LiD-GLM.
Which functions Lipschitz constrained models like LiD-GLM are capable of approximating also depends on 
the choice of the activation function.
Huster et al.\ \cite{huster_limitations_2019} show that ReLU-NNs with a layerwise constrained
Lipschitz constant of one are not capable of approximating all functions with Lipschitz constant one.
However, it has been shown that NNs using the GroupSort activation function do not share this limitation \cite{anil_sorting_2019}. Future research could clarify which class of functions can be universally approximated with LiD-GLM when using activation functions like GroupSort. 

Given the preserved interpretability of the regression coefficients with LiD-GLM, confidence intervals for these coefficients would be a valuable tool for researchers. Such methods may utilize Lipschitz constraint and will likely be based on a bootstrapping technique.

However, at the current state of research, LiD-GLM can already be a valuable tool for researchers often working with GLM who suspect their data might benefit from slightly nonlinear models. A researcher could then test a variety of different Lipschitz constants and, depending on the necessary criteria of interpretability, make an informed decision which model to use or which degree of nonlinearity to allow.

\section{Disclosure Statement}
The authors report there are no competing interests to declare. 

\section{Disclosure Concerning Generative AI}
In the writing of this paper, generative AI was used only as a tool for literature search and as an auto-complete tool during programming (so-called ``inline suggestions'' via GitHub Copilot).

\bibliography{references.bib}

\clearpage

\noindent{\huge\bf Supplementary Material}

\appendix

\section{Decomposition of i-ResNet}\label{sec: append-iResNet-Decomp}
\begin{lma}\label{lma: append-iResNet-Decomp}
	Any i-ResNet $T$ \cite{behrmann_invertible_2019} can be written in the form 
    \begin{equation}\label{eq: nu1-deconstructedT1}
        T = \text{Id}+\nu
    \end{equation}
    with $L(\nu)<\infty$.
\end{lma}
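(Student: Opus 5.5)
The plan is to unfold the definition of an i-ResNet from \cite{behrmann_invertible_2019}. By that definition, $T$ is a finite composition $T = T^{(m)}\circ\cdots\circ T^{(1)}$ of residual blocks $T^{(j)} = \text{Id}+g_j$. Each $g_j$ is a neural network with $L(g_j)\le c_j<1$.

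I would set $\nu := T - \text{Id}$. Then \eqref{eq: nu1-deconstructedT1} holds trivially, and the only content of the lemma is the bound $L(\nu)<\infty$. For that, I would use two elementary facts about the Lipschitz norm of Definition~\ref{def: lip}. First, it is subadditive: $L(f+g)\le L(f)+L(g)$, by the triangle inequality. Second, it is submultiplicative under composition: $L(f\circ g)\le L(f)L(g)$.

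Applying these to each block gives $L(T^{(j)})\le L(\text{Id})+L(g_j)\le 1+c_j<2$. Applying submultiplicativity to the composition then gives $L(T)\le\prod_{j=1}^m(1+c_j)\le 2^m<\infty$. Finally,
\[
L(\nu)=L(T-\text{Id})\le L(T)+1\le \prod_{j=1}^m(1+c_j)+1<\infty .
\]

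If one wants a sharper and more interpretable bound, one can expand the composition instead. Telescoping yields $\nu = \sum_j g_j\circ T^{(j-1)}\circ\cdots\circ T^{(1)}$, which gives $L(\nu)\le\prod_j(1+c_j)-1$. This reduces to $L(\nu)=L(g_1)\le c_1$ in the single-block case.

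There is no real obstacle here. The only point needing care is to note that the argument does not require $L(\nu)<1$, which is false in general for several blocks. Only finiteness is asserted, so the lemma holds for any finite number of blocks. It also holds regardless of which norms are used on $\mathbb{R}^m$, since $L(\text{Id})=1$ for any single norm used on both domain and codomain.
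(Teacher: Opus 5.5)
Your proof is correct and takes essentially the same approach as the paper: both rest on subadditivity of $L$ and submultiplicativity under composition, and the paper's inductive step $\tilde{\nu}=\nu+g_m\circ(\text{Id}+\nu)$ is exactly one step of your telescoping identity $\nu=\sum_j g_j\circ T^{(j-1)}\circ\cdots\circ T^{(1)}$. Your direct route through $L(T)\le\prod_j(1+c_j)$ only avoids the formal induction and makes the constant $\prod_j(1+c_j)-1$ explicit.
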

\begin{proof}
	Any i-ResNet $T$ is, by definition, of the form $T=(T^m\circ\dots\circ T^1)$, where each $T^j, 1\leq j\leq m$ is a residual block $T^j=\text{Id}+g_j$ with $L(g_j)<1$ and thus the Lemma can be proven by induction over $m$.
    For $m=1$, the statement is trivially true, as $T=T^1$ already has form \eqref{eq: nu1-deconstructedT1}.
    
	If the statement has been proven for all i-ResNets with $m-1$ blocks, then a network $T=T^m\circ\dots \circ T^1$ with $m$ blocks can be rewritten as $T= (\text{Id}+g_m)\circ(\text{Id}+\nu)$
	with $L(\nu)<\infty$ and therefore:
	\begin{equation*} 
		T^m\circ\dots \circ T^1 = (\text{Id}+g_m)\circ(\text{Id}+\nu) = \text{Id} + \underbrace{\nu+g_m(\text{Id}+\nu)}_{=:\tilde{\nu}},
	\end{equation*}
	where $\tilde{\nu}$ is a NN with
	$L(\tilde{\nu})\leq L(\nu)+L(g_j)L(\text{Id}+\nu)<\infty$.
	This proves the statement for any i-ResNet with $m$ blocks and Lemma \ref{lma: append-iResNet-Decomp} follows by induction. 
\end{proof}	
	
	\section{Relation between LiD-GLM and SDDR}\label{sec: append-asSDDR}
	In this section, we discuss the close relation between the extension of Section~\ref{sec: nu1} to semi-structured deep distributional regression \cite{rugamer_deepregression_2023,rugamer_semi-structured_2024}.	
	A general SDDR model
    describes every parameter $\theta_i$ of a parametric distribution $\mathcal{D}(\theta_1\dots, \theta_K)$ as $\theta_k=h_k(\eta_k)$, where $h_k:\mathbb{R}\to \Theta_k$ are one-to-one mappings into the right parameter space and where
	\begin{equation}\label{eq: SDDR-modelStruct}
		\eta_k = \textbf{x}^T w+\sum_{j=1}^J f_{k,j}(\textbf{z})+\sum_{l=1}^{L}d_{k,l}(\textbf{u}),
	\end{equation}
    with $\textbf{x}, \textbf{z}$ and $\textbf{u}$ being (potentially overlapping) subsets of input features, $w$ a weight vector, $f_{k,j}$ nonlinear functions formed as linear combinations of some basis-functions (for example B-splines) and $d_{k,l}$ neural networks \cite{rugamer_deepregression_2023}.  

    In our model, $K=1$ and $\mathcal{D}$ is a one-parameter exponential family distribution, parameterized by the canonical parameter $\theta=\theta_1$ (see Equation~\eqref{eq: glm-expFamily}). The predictors of our extension from Equation~\eqref{eq: intro-T1} respectively Equation~\eqref{eq: ldglm-nu1}, can then be seen as a special case of Equation~\eqref{eq: SDDR-modelStruct} with $J=0$ (no structured nonlinear functions), $L=1$, $\textbf{x}=\textbf{u}$ and $d_{1}$ a Lipschitz-constrained neural network.
    Without the extension of Section~\ref{sec: nu2}, the SDDR function $h_1$ corresponds to the function $\Lambda$ in the GLM \eqref{eq: glm-muEquation}.

\section{Monotonicity}\label{sec: append-Monoton}

\begin{lma}\label{lma: nu1-CompMonoton}
	For $q,r\in\mathbb{N}\cup\{\infty\}$, let $T_p:(\mathbb{R}^k, ||.||_q)\to (\mathbb{R}^k, ||.||_r)$ as in Equation \eqref{eq: nu1-deconstructedT1} with $L(\nu_p)<1$. Let for arbitrary $1\leq i \leq k$ be $\pi_i:\mathbb{R}^k\to\mathbb{R}, (x_1,\dots, x_k)\mapsto x_i$ the projection on the $i$-th component and, for arbitrary $\textbf{x}_{-i}=(x_1,\dots, x_{i-1},x_{i+1},\dots, x_k)\in\mathbb{R}^{k-1}$, let $ \iota_{\textbf{x}_{-i}}$ be the natural embedding $ \iota_{\textbf{x}_{-i}}:\mathbb{R}\to\mathbb{R}^k,x_i\mapsto (x_1,, x_{i-1}, x_i,x_{i+1},\dots, x_k)=\textbf{x}$.\\
	Then the function $\pi_i \circ T_p \circ \iota_{\textbf{x}_{-i}}:\mathbb{R}\to\mathbb{R}$ is strictly increasing.
\end{lma}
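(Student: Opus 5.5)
The plan is to reduce everything to a one-dimensional comparison. I will show that the scalar function $f := \pi_i \circ T_p \circ \iota_{\textbf{x}_{-i}}$ has increments bounded below by a positive multiple of the increment in $x_i$. By the decomposition of Lemma~\ref{lma: append-iResNet-Decomp}, $T_p = \text{Id} + \nu_p$. Writing $\textbf{x}(t) := \iota_{\textbf{x}_{-i}}(t)$, this gives
\begin{equation*}
f(t) = \pi_i\big(\textbf{x}(t)\big) + \pi_i\big(\nu_p(\textbf{x}(t))\big) = t + \nu_{p,i}\big(\textbf{x}(t)\big).
\end{equation*}
So for $s<t$ we get $f(t)-f(s) = (t-s) + \big(\nu_{p,i}(\textbf{x}(t)) - \nu_{p,i}(\textbf{x}(s))\big)$. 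The task is to show that the second bracket is strictly smaller in absolute value than $t-s$.

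First, fix any constant $L'$ with $L(\nu_p) < L' < 1$. Such an $L'$ exists because $L(\nu_p)<1$. By the definition of $L(\nu_p)$ as an infimum, $\nu_p$ is $L'$-Lipschitz from $(\mathbb{R}^k,||.||_q)$ to $(\mathbb{R}^k,||.||_r)$. Choosing $L'$ this way avoids any question of whether the infimum itself is attained, although Definition~\ref{def: lip} asserts that it is.

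Second, I compute both norms involved. The vectors $\textbf{x}(t)$ and $\textbf{x}(s)$ differ only in the $i$-th coordinate, so $||\textbf{x}(t)-\textbf{x}(s)||_q = |t-s|$ for every $q\in\mathbb{N}\cup\{\infty\}$. On the output side, for any vector $\textbf{w}\in\mathbb{R}^k$ and any $r\in\mathbb{N}\cup\{\infty\}$, the single coordinate satisfies $|w_i| \le ||\textbf{w}||_r$. For finite $r$ this holds because $|w_i|^r \le \sum_j |w_j|^r$; for $r=\infty$ it holds by definition. Combining the two facts,
\begin{equation*}
\big|\nu_{p,i}(\textbf{x}(t)) - \nu_{p,i}(\textbf{x}(s))\big| \le ||\nu_p(\textbf{x}(t))-\nu_p(\textbf{x}(s))||_r \le L' \, ||\textbf{x}(t)-\textbf{x}(s)||_q = L'(t-s).
\end{equation*}

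Finally, for $s<t$ this yields $f(t)-f(s) \ge (t-s) - L'(t-s) = (1-L')(t-s) > 0$, which is strict monotonicity. As a by-product, $f$ is even bi-Lipschitz with lower constant $1-L'$.

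The argument is elementary. The only point needing care, and the one I would treat as the main obstacle, is handling the mixed norms: the argument uses that the $q$-norm of a single-coordinate displacement equals its absolute value, and that the $r$-norm dominates each coordinate. Both facts hold uniformly over all $q,r\in\mathbb{N}\cup\{\infty\}$, so no case split on the norms is needed.
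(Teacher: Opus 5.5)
Your proof is correct and follows the paper's argument essentially step for step. Both write $f=\mathrm{Id}_1+g$ and use the same two norm facts: a single-coordinate displacement has $q$-norm $|t-s|$, and $|w_i|\le\|w\|_r$. Both then reach the same positive lower bound on $f(t)-f(s)$. The only cosmetic difference is that you introduce an auxiliary $L'\in(L(\nu_p),1)$, whereas the paper uses $L(\nu_p)$ directly, relying on the infimum being attained.
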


	\begin{proof}
		We define for easier notation $f := \pi_i\circ T_p\circ \iota_{\textbf{x}_{-i}}$. It is then easy to see that \\ $ f(x_i)= x_i + \pi_i(\nu_p(\iota_{\textbf{x}_{-i}}(x_i)))$. 
		
		In other words, $f= \textbf{Id}_1+ g$
		
		where $\textbf{Id}_1$ is the one-dimensional identity function and $g:= \pi_i\circ \nu_p\circ\iota_{\textbf{x}_{-i}}$.
        Let now $x_i,\widetilde{x_i}\in\mathbb{R}$ be arbitrary. Then:
		\begin{align*}
			|g(x_i)-g(\widetilde{x_i})|&= |\pi_i\circ \nu_p\circ\iota_{\textbf{x}_{-i}}(x_i)-\pi_i\circ \nu_p\circ\iota_{\textbf{x}_{-i}}(\widetilde{x_i})| \overset{(\star)}{\leq} || \nu_p\circ\iota_{\textbf{x}_{-i}}(x_i)-\nu_p\circ\iota_{\textbf{x}_{-i}}(\widetilde{x_i})||_r \\
			\overset{\nu_p\text{ Lipsch.}}&{\leq} L(\nu_p)\cdot ||\iota_{\textbf{x}_{-i}}(x_i)-\iota_{\textbf{x}_{-i}}(\widetilde{x_i})||_q \overset{(\star\star)}{=} L(\nu_p)\cdot |x_i-\widetilde{x_i}|,
		\end{align*} 
		where $(\star)$ holds because for $t=(t_1,\dots, t_k)\in\mathbb{R}^k$:
		$$
		|t_i|^q\leq \sum_{j=1}^{k} |t_j|^q=||t||_q^q \quad \text{for } q\in\mathbb{N} \quad \text{ and } \quad |t_i|\leq \sup_{1\leq j \leq k} |t_j| = ||t||_\infty 
		$$
		and $(\star\star)$ follows since, for any $q\in\mathbb{N}\cup\{\infty\}$, $||t||_q = |t_i|$ if $|t_j|=0$ for all $j\neq i$.\\
		This implies that $g$ is Lipschitz with $L(g)\leq L(\nu_p)<1$ and therefore, for $a<b\in\mathbb{R}$:
		\begin{align*}
			0 \overset{L(g)<1}{<} |b-a| - L(g) \cdot |b-a| \leq b-a - |g(b)-g(a)| \leq b-a + (g(b)-g(a)) = f(b)-f(a),
		\end{align*}
		which is equivalent to $f(b) > f(a)$, proving that $f$ is strictly increasing.
		
	\end{proof}
    
	The above proof cannot easily be extended to cases with $L(\nu_p)\geq 1$, 
    however, as mentioned in Section~\ref{sec: nu1}, we expect cases with $L(\nu_p)< 1$ to be of higher practical interest.
    It is also important to note that generally, for $j\neq i$, the transformed covariate $z_j$ is not necessarily monotone in $x_i$. Even if the latter could be guaranteed, the linear combination with the coefficient vector $\beta$, which might have both positive and negative components, generally prevents monotonicity of $\eta$ in $x_i$, unless restrictive bounds are placed on $L(\nu_p)$.

	\section{Relation to Conditional Normalizing Flows}\label{sec: append-CondNormFlow}
    In this section, we show how Equation~\eqref{eq: intro-T2} fits into the framework defined in \cite{winkler_learning_2023} and could be therefore interpreted as a special case of a conditional NF \cite{algren_flow_2023, winkler_learning_2023}.

	In such a conditional NF with a given input $\textbf{x}\in\mathcal{X}$ and a regression target $y\in\mathcal{Y}$, a mapping $g_\phi:\mathcal{Y}\times\mathcal{X}\to \mathcal{Z}$ is learned that transforms data distributed according to a conditional distribution $f_{\mathcal{Y}|\mathcal{X}}(y|\textbf{x})$ into data distributed according to some known conditional distribution $f_{\mathcal{Z}|\mathcal{X}}(z|\textbf{x})$. $\phi$ denotes the parameters of that mapping. The function $g_\phi$ is bijective in $y$ and $z$ and partially differentiable both directions. In other words, conditional on $\textbf{x}$, 
	\begin{equation}\label{eq: append-CondNormFlow-Winkler}
		Y\overset{d}{=} g_\phi^{-1} (z, \textbf{x})
	\end{equation}
	And, like in any other NF, the density can be evaluated through the transformation formula and new samples can be drawn.
	
	However, in our model, unlike in \cite{winkler_learning_2023}, the target $y$ is not necessarily continuous and the function $h_\theta$ from Equation~\eqref{eq: intro-T2} is not guaranteed to be a diffeomorphism in $y$. In cases where $y$ is in fact continuous and $h_\theta$ does fulfill the relevant conditions of invertibility and differentiability, our model from Equation~\eqref{eq: intro-T2} can be converted into form \eqref{eq: append-CondNormFlow-Winkler} by setting $g_\phi := T_d^{-1} \circ h_\theta ^{-1}$. 
	Compared to \cite{winkler_learning_2023}, this implements the NF in the reverse direction, which is however mostly inconsequential and quite common in the field of NF.\\
	Cases in which $h_\theta$ is not a diffeomorphism are now discussed in more detail. For this, we define, for $u\in\mathbb{R}$, $\tilde{g}_\theta(u):= \sup\{x\in\mathbb{R}: h_\theta(x) \leq u\}$, where analogous arguments as in the proof of Theorem \ref{thm: nu2-inverseTrafoThm} can be used to deduct from the left-continuity and monotonicity of $h_\theta$ that in this definition $\sup$ can be changed into $\max$, and that $\tilde{g}_\theta$ is well-defined and right continuous. We also set $g(u):= T_d^{-1} \circ \tilde{g}_\theta$.
	\paragraph*{Discrete Target Random Variable}
	Similar arguments as in the proof of Theorem \ref{thm: nu2-inverseTrafoThm} show that for $y\in\mathcal{Y}$:
	\begin{align*}
		F(y) &= \mathbb{P}(Y\leq y) = \mathbb{P}(h_\theta(T_d(V))\leq y) = \mathbb{P}(T_d(V)\leq \tilde{g}_\theta( y)) \\
		 &= \mathbb{P}(V\leq T_d^{-1}(\tilde{g}_\theta(y))) = G_0(g(y)).
	\end{align*}
	For $\mathcal{Y}=\{y_i, i\in\mathcal{I}\}$ with $\mathcal{I}\subseteq\mathbb{N}$ with $y_i < y_j$ for all $i<j$, the learned probability mass function can then easily be evaluated as $f(y_i) = F(y_i) - F(y_{i-1})$.
	
	\paragraph*{Continuous Target Random Variable}
	In a traditional (conditional) NF, the density of the transformed random variable is simply computed through the density transformation formula, leveraging the fact that the learned transformation is a diffeomorphism. Through the assumptions made on $h_\theta$ in Theorem \ref{thm: nu2-inverseTrafoThm}, we can guarantee that the inverse of $g$ is uniquely defined and differentiable in $\mathcal{S}^\circ$, meaning we can still use the transformation formula in relevant points. If $f_{V}$ is the density function associated with $G_0$, we therefore define the learned density piece wise as 
	\begin{equation}
		f_\mu(y) = 
		\begin{cases}
			0 \qquad &\quad ,y\in\mathbb{R}\backslash \mathcal{S}\\
			f_V(g^{-1}(y)) (\frac{\partial}{\partial y} g^{-1}(y)) &\quad ,y\in\mathcal{S^\circ} \\
			\lim_{\substack{t\to y \\ t\in\mathcal{S}^\circ}} f_\mu(t) &\quad ,y\in\partial\mathcal{S}\cap \mathcal{S}
		\end{cases}
	\end{equation}
	for $y\in \mathbb{R}$. We deliberately use a slight abuse of notation to define the last case to indicate that this choice of density values on the finite set of border points of $\mathcal{S}$ is essentially arbitrary and that we only choose this definition as we feel it is the most natural continuation of the density to points where it isn't uniquely defined.
	
	\section{Choice of Vector Norms}
    It is a well-known theoretical result that, on finite-dimensional vector spaces, all norms are equivalent. Despite this, we see in the applications of Section~\ref{sec: applications}, a clear tendency by the hyperparameter optimization to select the $||.||_2$-norm over the $||.||_1$-norm. In studying a regularization of NNs through a Lipschitz constraint, Gouk et al.\ \cite{gouk_regularisation_2021} also observe performance differences between norms depending on the application. We do not think that existing results give sufficient indication that certain norms are superior in the context of Lipschitz-constrained NNs, but want to make the reader aware of potential performance differences. We also want to emphasize differences in computational cost and accuracy: general $||.||_{p\to p}$-norms are NP-hard to compute and have to be approximated, whereas the $||.||_{1\to1}$ and $||.||_{\infty\to\infty}$ matrix norms are just the maximum absolute row/column sum \cite{johnston_how_2016}.

\section{Proof of Theorem \ref{thm: nu2-inverseTrafoThm}}\label{append: inverseTrafoProof}
    In this section, we show the proof of Theorem \ref{thm: nu2-inverseTrafoThm}:
    \trafoTheorem*
   \begin{proof}
    We use the notation from Thm.~\ref{thm: nu2-inverseTrafoThm} and prove the Theorem by construction.
	Let $\theta\in \mathcal{I}$ be arbitrary, let $Y\sim f_\theta$ and let $F_\theta:=\mathbb{P}(Y\leq \cdot)$ be the cumulative distribution function (cdf) of $Y$. It is a standard result that $F_\theta$ is non-decreasing with $\lim_{x\to - \infty}F_\theta(x) = 0$ and $\lim_{x\to \infty}F_\theta(x) = 1$. Because of this, the quantile function $Q_{\theta}:(0,1)\to \mathbb{R}$:
	\begin{align*}
		Q_{\theta}(u)& := \inf\{x\in\mathbb{R}:F_\theta(x)\geq u\} \overset{(\star)}{=} \min\{x\in\mathbb{R}:F_\theta(x)\geq u\},
	\end{align*}
    is well defined and because any cdf is right-continuous, $(\star)$ holds. It is standard theory that $Q_{\theta}$ is also monotonically increasing and left-continuous.
    Let now $y\in\mathbb{R}$ be arbitrary. We show that for $u\in(0,1)$, the following holds:
	\begin{equation}\label{eq: nu2-otherFamilies-quantileIneq}
		Q_{\theta}(u)\leq y \iff u\leq F_\theta(y).
	\end{equation}
    The direction ``$\Leftarrow$'' is trivially true as, per definition, $Q_\theta(u)$ fulfils: $Q_\theta(u)\leq x$ for all $x$ with $F_\theta(x)\geq u$. The direction``$\Rightarrow$'' follows because, if $Q_{\theta}(u)\leq y$, then by definition, there exists a $\tilde{y}\leq y$ with $F_\theta(\tilde{y})\geq u$ and by monotonicity: $F_\theta(y)\geq F_\theta(\tilde{y})\geq u$.
    
    Let now $G_0 =\Phi$ be the cumulative distribution function of the standard normal distribution and let $h:\mathcal{I}\times \mathbb{R}\to\mathbb{R}, h(\theta,v):= Q_{\theta}(\Phi(v))$. Then $h$ is increasing in its second argument as a concatenation of monotonous functions and for arbitrary $y\in\mathbb{R}$ and $V\sim G_0$ the following holds:
	\begin{align*}
		\mathbb{P}\big(h_\theta(V)\leq y\big) & \overset{\eqref{eq: nu2-otherFamilies-quantileIneq}}{=} \mathbb{P}(\Phi(V)\leq F_\theta(y)) \overset{G_0(V)\sim  \mathcal{U}[0,1]}{=} F_\theta(y)
	\end{align*}
	implying that  $h_\theta(V)\overset{d}{=}Y$.
    It now only remains to show that $h_\theta$ is also increasing in its second argument. For this, it is sufficient to show that $Q_\theta$ is increasing in $\theta$. 
    We first note that by the regularity assumptions of the exponential family, the canonical parameter $\theta$ is increasing in $\theta$. In \cite{rink_confidence_2025} it is shown that, for a random variable $Y\sim f_\theta$ distrributed according to an exponential family distribution, the probability $\mathbb{P}_\theta[Y\geq y ]$ is increasing in $\theta$ for all $y\in\mathbb{R}$. From this, it is then easy to see that $F_\theta(y)$ is decreasing in $\theta$.

    Let now $\theta_1 < \theta_2$ and $u\in(0,1)$ be arbitrary and let $x\in\mathbb{R}$ such that $F_{\theta_2}\geq u$. Then $F_{\theta_1}\geq F_{\theta_2}\geq u$. Therefore, 
    \begin{equation}
        \{x\in\mathbb{R}: F_{\theta_2}(x)\geq u\} \subset \{x\in\mathbb{R}: F_{\theta_1}(x)\geq u\}
    \end{equation}
    and therefore 
    \begin{equation}
        \min\{x\in\mathbb{R}: F_{\theta_2}(x)\geq u\} \geq \min\{x\in\mathbb{R}: F_{\theta_1}(x)\geq u\}
    \end{equation}
    which, by definition, is equivalent to $Q_{\theta_2}(u)\geq Q_{\theta_1}(u) $ and, because $u$ was arbitrary, this proves that $Q_\theta$ is increasing in $\theta$.

\end{proof} 

We now show that the probability density of $Y$ from \eqref{eq: intro-T2} can be explicitly computed and that it is compatible with backpropagation.
For this, we consider the cases of discrete and continuous $Y$ separately.
Let first $Y$ be a discrete random variable and use the notation of \eqref{eq: intro-T2} and the proof of Theorem \ref{thm: nu2-inverseTrafoThm}. Then, the following formula holds for any potential value $k$ of $Y$:

\begin{align}
    f_\theta(k) &= \mathbb{P}[Q_\theta \circ \Phi \circ T_d(V) \leq k] - \mathbb{P}[Q_\theta \circ \Phi \circ T_d(V) \leq k] \\
    \overset{\eqref{eq: nu2-otherFamilies-quantileIneq}}&{=} \mathbb{P}[\Phi \circ T_d(V) \leq F_\theta(k)] - \mathbb{P}[\Phi \circ T_d(V) \leq F_\theta(k-1)] \\
    &= \Phi(T_d^{-1}\circ \Phi^{-1} \circ F_\theta(k)) - \Phi(T_d^{-1}\circ \Phi^{-1} \circ F_\theta(k-1)) \label{eq: append-nu2Thm-darstfY-diskret}
\end{align}

\begin{theo}
Let $\omega$ be an arbitrary weight of $T_d$. Then, \eqref{eq: append-nu2Thm-darstfY-diskret} is differentiable in $\omega$ and $\theta$.    
\end{theo}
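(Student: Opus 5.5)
The plan is to treat \eqref{eq: append-nu2Thm-darstfY-diskret} as a composition of maps and apply the chain rule. Each of the two summands has the form
\[
\Psi(\omega,\theta) = \Phi\big(T_d^{-1}(\Phi^{-1}(F_\theta(k)); \omega)\big),
\]
with $k$ replaced by $k-1$ in the second term. Since $\Phi$ is smooth, it suffices to show two things. First, $\theta\mapsto a(\theta):=\Phi^{-1}(F_\theta(k))$ is differentiable. Second, $(\omega,a)\mapsto T_d^{-1}(a;\omega)$ is differentiable.

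\textbf{First map.} For a discrete exponential family \eqref{eq: glm-expFamily}, $F_\theta(k)=\sum_{j\le k} \exp\big((j\theta-b(\theta))/\phi+c(j,\phi)\big)$. This is differentiable in $\theta$ by the assumed twice differentiability of $b$. When the support is infinite below $k$, I would argue termwise differentiation by dominated convergence on compact $\theta$-neighbourhoods, using the standard exponential family fact that moments are finite in the interior of the natural parameter space. Whenever $F_\theta(k)\in(0,1)$, $\Phi^{-1}$ is smooth there. The boundary cases $F_\theta(k)\in\{0,1\}$ give $\Phi^{-1}=\mp\infty$. There the corresponding summand is constantly $0$ or $1$, since $T_d^{-1}$ is an increasing bijection of $\mathbb{R}$ and so maps $\pm\infty$ to $\pm\infty$ in the limit. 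Such summands are trivially differentiable, and I would treat them separately.

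\textbf{Second map, the main obstacle.} Here the tool is the implicit function theorem. Write $T_d(v;\omega)=v+\nu_d(v;\omega)$. Then $u=T_d^{-1}(a;\omega)$ is the unique solution of
\[
H(u,\omega,a):=u+\nu_d(u;\omega)-a=0;
\]
uniqueness follows from invertibility of i-ResNets, via Lemma~\ref{lma: append-iResNet-Decomp} and the contraction property of each block. The implicit function theorem needs $H$ to be $C^1$ and $\partial_u H = 1+\partial_u\nu_d \neq 0$. The derivative condition holds because each residual block has $|\partial_u g_j|\le L(g_j)<1$, so each block derivative is at least $1-L(g_j)>0$, and the chain rule makes the product positive. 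With several blocks, $T_d^{-1}$ is the composition of the blockwise inverses, and the argument is applied to each block.

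The delicate point is smoothness of $\nu_d$. For smooth activations (e.g.\ ELU, softplus, tanh) and smooth weight normalisation such as spectral scaling $W/\max(1,\|W\|/c)$, $H$ is $C^1$ except where the normalisation switches. I would state the result under the assumption that the activations are $C^1$ and the normalisation is differentiable at the current $\omega$. The implicit function theorem then gives differentiability of $T_d^{-1}$ in $(\omega,a)$, with
\[
\partial_\omega T_d^{-1}=-\frac{\partial_\omega \nu_d}{1+\partial_u\nu_d}\bigg|_{u=T_d^{-1}(a)}.
\]
This formula is exactly what backpropagation computes.

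For the ReLU and GroupSort networks actually used in the paper, the maps are only piecewise linear. The best available statement is then differentiability almost everywhere in $\omega$ and $\theta$. I would argue it by applying the same implicit-function argument on each linear region and noting that region boundaries form a null set, flagging this as the expected weakening. Finally, combining the two maps via the chain rule and taking the difference of the two summands yields differentiability of \eqref{eq: append-nu2Thm-darstfY-diskret} in $\omega$ and $\theta$.
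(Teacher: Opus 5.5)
Your proposal is correct and follows the same route as the paper: the chain rule, with differentiability in $\theta$ coming from the exponential-family sum for $F_\theta(k)$ and differentiability in $\omega$ coming from that of $T_d^{-1}$. The paper simply attributes the latter to Behrmann et al., whereas you derive it via the implicit function theorem using $1+\partial_u\nu_d>0$, and your caveats are accurate refinements the paper leaves implicit: the summands are constant when $F_\theta(k)\in\{0,1\}$, and the claim needs $C^1$ activations, holding only almost everywhere for the ReLU/GroupSort networks used in the experiments.
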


\begin{proof}
    The proof follows by chain rule. We only prove the Theorem for the first term in \eqref{eq: append-nu2Thm-darstfY-diskret} as the argument for the second term is identical.
    First, the differentiability in $\omega$:
    \begin{align*}
        \frac{\partial}{\partial \omega} \Phi(T_d^{-1}\circ \Phi^{-1} \circ F_\theta(k)) &= \frac{\partial \Phi( T_d^{-1}\circ \Phi^{-1} \circ F_\theta(k))}{(\partial T_d^{-1}\circ \Phi^{-1} \circ F_\theta(k))} \frac{(\partial T_d^{-1}\circ \Phi^{-1} \circ F_\theta(k))}{\partial \omega} \\
        &= \phi(T_d^{-1}\circ \Phi^{-1} \circ F_\theta(k)) \frac{\partial T_d^{-1}}{\partial \omega}(\Phi^{-1} \circ F_\theta(k))
    \end{align*}
    where $\phi$ is the standard normal distribution density. This is well-defined as $\frac{\partial T_d^{-1}}{\partial \omega}$ is well-defined \cite{behrmann_invertible_2019}.
    
    We will not show the entire chain rule computation for differentiability in $\theta$ as it is largely trivial. The crucial part is to show that $F_\theta(k)$ is differentiable in $\theta$.
    However, this follows by Definition~\eqref{eq: glm-expFamily} as in the discrete case $F_\theta(k) = \sum_{l\leq k} \exp\left(\frac{l\theta-b(\theta)}{\phi}+c(l,\phi)\right)$, which is obviously differentiable in $\theta$.
\end{proof}

Let now $Y$ be a continuous random variable with cdf $F_\theta$. As $Y$ has is absolutely continuous, its cdf is continuous. To avoid an unnecessarily complex notation, we only consider the case where the support of $Y$ is a single interval $(j_1,j_2)$, which, to our knowledge, includes any practically relevant exponential family distribution. Note that we do not consider the case of (half) closed intervals separately, as the value of the density finitely many points does not matter. Similarly, a generalization of the theory below to distributions with disjoint supports would also be possible, only resulting in $h_\theta$ having jumps on a set of at most countably many points (measure zero). Under these conditions, the following Lemma holds:

\begin{lma}
Let $h_\theta$ be defined as in the proof of Theorem \ref{thm: nu2-inverseTrafoThm}. Then, it is a diffeomorphism from $\mathbb{R}$ to $(j_1,j_2)$.
\end{lma}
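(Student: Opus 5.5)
Since $h_\theta = Q_\theta\circ\Phi$, I will show that each factor is a diffeomorphism between the appropriate sets and then compose them. $\Phi$ is a smooth bijection $\mathbb{R}\to(0,1)$ with $\Phi'=\phi>0$, so by the inverse function theorem it is a diffeomorphism onto $(0,1)$. It therefore suffices to prove that $Q_\theta:(0,1)\to(j_1,j_2)$ is a diffeomorphism. It is in fact the inverse of the restriction $F_\theta|_{(j_1,j_2)}$.

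\textbf{Step 1 (bijectivity).} Since $Y$ is absolutely continuous with support the interval $(j_1,j_2)$, $F_\theta$ is continuous. It equals $0$ on $(-\infty,j_1]$ and $1$ on $[j_2,\infty)$, and it is strictly increasing on $(j_1,j_2)$: a flat piece inside the support would give an open subinterval of probability zero, contradicting that $(j_1,j_2)$ is the support. By the intermediate value theorem, $F_\theta|_{(j_1,j_2)}$ is then a continuous strictly increasing bijection onto $(0,1)$.

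For $u\in(0,1)$, the set $\{x:F_\theta(x)\ge u\}$ is $[x_u,\infty)$, where $x_u\in(j_1,j_2)$ is the unique point with $F_\theta(x_u)=u$. Hence $Q_\theta(u)=x_u$, i.e.\ $Q_\theta=(F_\theta|_{(j_1,j_2)})^{-1}$, and equivalence \eqref{eq: nu2-otherFamilies-quantileIneq} is consistent with this. So $h_\theta$ is a bijection $\mathbb{R}\to(j_1,j_2)$.

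\textbf{Step 2 (smoothness).} In the exponential family \eqref{eq: glm-expFamily}, the density $f_\theta(y)=\exp\bigl((y\theta-b(\theta))/\phi+c(y,\phi)\bigr)$ is strictly positive on the support. Under the standard regularity assumptions I take it to be continuous in $y$ on $(j_1,j_2)$, which holds for all practically relevant families such as the normal, gamma and inverse Gaussian. The fundamental theorem of calculus then gives that $F_\theta$ is $C^1$ on $(j_1,j_2)$ with $F_\theta'=f_\theta>0$. The inverse function theorem yields that $Q_\theta$ is $C^1$ with
$$Q_\theta'(u)=\frac{1}{f_\theta(Q_\theta(u))}.$$
Composing, $h_\theta'(v)=\phi(v)/f_\theta(h_\theta(v))>0$, and the inverse $\Phi^{-1}\circ F_\theta$ is $C^1$ as well. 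This establishes the diffeomorphism property, or $C^k$ if $f_\theta$ is $C^{k-1}$ in $y$.

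\textbf{Main obstacle.} The delicate point is Step 2's requirement that $f_\theta$ be continuous and strictly positive on the open support. Absolute continuity alone only gives differentiability of $F_\theta$ almost everywhere, so this regularity has to be imported explicitly from the exponential-family assumption \eqref{eq: glm-expFamily} and the stated regularity conditions. Once it is granted, the rest is a routine application of the inverse function theorem.
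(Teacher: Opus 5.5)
Your proof is correct and follows the same route as the paper. Both arguments use that $\Phi$ is a diffeomorphism $\mathbb{R}\to(0,1)$, identify $Q_\theta$ with the inverse of $F_\theta$ restricted to $(j_1,j_2)$, and conclude with the inverse function theorem. You are in fact more careful than the paper, which only shows $Q_\theta(u)\le j_2$ (not strictly) and applies the inverse function theorem from mere differentiability of $F_\theta$, whereas you establish bijectivity and the needed condition $F_\theta' = f_\theta > 0$ explicitly.
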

\begin{proof}
    Since $\Phi$ is a diffeomorphism $\mathbb{R}\to(0,1)$, it only remains to show that $Q_\theta$ is a diffeomorphism $(0,1)\to(j_1,j_2)$. 
    
    To see that the image of $Q_\theta$ is contained in $(j_1,j_2)$, one can easily conclude from \eqref{eq: nu2-otherFamilies-quantileIneq} that $Q_\theta(u)\leq j_2$ for all $u\in(0,1)$. If for any $u\in(0,1)$ it was the case that $Q_\theta(u)\leq j_1$, then \eqref{eq: nu2-otherFamilies-quantileIneq} implies that $0<u\leq F_\theta(j_1) = 0$, which would be a contradiction. Therefore, the image of $Q_\theta$ is a subset of $(j_1,j_2)$.

    On $(j_1,j_2)$ however, $F_\theta$ is, by definition, invertible and it is a well known result that the inverse function is then the quantile function. Also, because $F_\theta$ is differentiable on the entire $(j_1,j_2)$, the inverse function theorem implies that $Q_\theta$ is also differentiable on the entire $(0,1)$ and therefore a diffeomorphism.
\end{proof}

The previous Lemma implies that $h_\theta$ is compatible with the density transformation theorem and therefore the density of \eqref{eq: intro-T2} can be computed explicitly.
We now also show that $h_\theta$ is compatible with gradient descent, i.e.\ that it is also differentiable with regards to the weights of $T_p$. For this, we only need to prove differentiability in $\theta$:
\begin{lma}
    The partial derivative $ \frac{\partial h_\theta}{\partial \theta}(u)$ exist for all $u\in\mathbb{R}$ and $\frac{\partial h^{-1}}{\partial \theta}(y)$ exist for all $y\in (j_1,j_2)$.
\end{lma}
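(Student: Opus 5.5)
We treat $h_\theta(u)=Q_\theta(\Phi(u))$ as the inverse of a two-variable function and apply the implicit function theorem to $F(\theta,y):=F_\theta(y)$ on $\mathcal{I}^\circ\times(j_1,j_2)$.

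\textbf{Step 1: $\partial_\theta h^{-1}$.} The inverse of $h_\theta$ is $h_\theta^{-1}(y)=\Phi^{-1}(F_\theta(y))$ for $y\in(j_1,j_2)$. By the preceding Lemma, $F_\theta(y)\in(0,1)$ there, so $\Phi^{-1}$ is smooth at that point. Hence it suffices to show that $\theta\mapsto F_\theta(y)$ is differentiable. Write
\[
F_\theta(y)=\int_{j_1}^{y}\exp\Big(\tfrac{t\theta-b(\theta)}{\phi}+c(t,\phi)\Big)\,dt .
\]
The integrand is differentiable in $\theta$, with derivative $\frac{t-b'(\theta)}{\phi}f_\theta(t)$. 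We differentiate under the integral sign. For $\theta$ in a compact neighbourhood $[\theta_0-\delta,\theta_0+\delta]$ of an interior point, we dominate $|t-b'(\theta)|f_\theta(t)$ by
\[
(|t|+C)\big(f_{\theta_0-\delta}(t)+f_{\theta_0+\delta}(t)\big)e^{C'} .
\]
This uses convexity of $\theta\mapsto t\theta$, which gives $e^{t\theta}\le e^{t(\theta_0-\delta)}+e^{t(\theta_0+\delta)}$, together with continuity of $b$. The dominating function is integrable because the natural parameter space of an exponential family is open, so all moments exist in its interior. This domination is the main technical point. It is where we invoke the ``typical regularity assumptions'' stated in the paper: $\theta$ ranges over the interior of the natural parameter space and $b$ is twice differentiable. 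We therefore get
\[
\partial_\theta F_\theta(y)=\int_{j_1}^{y}\frac{t-b'(\theta)}{\phi}f_\theta(t)\,dt ,
\]
and consequently
\[
\partial_\theta h^{-1}(y)=\frac{\partial_\theta F_\theta(y)}{\varphi\big(\Phi^{-1}(F_\theta(y))\big)} .
\]

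\textbf{Step 2: $\partial_\theta h_\theta(u)$.} Fix $u\in\mathbb{R}$ and set $p=\Phi(u)\in(0,1)$. Then $y=h_\theta(u)$ is defined implicitly by $G(\theta,y):=F_\theta(y)-p=0$ with $y\in(j_1,j_2)$. Step 1 gives $\partial_\theta G$. In addition, $\partial_y G=f_\theta(y)>0$ on the support interval, since the density is a strictly positive exponential there. To apply the implicit function theorem we need $G$ to be $C^1$ jointly. Both partial derivatives are jointly continuous: for $\partial_y G$ this is immediate from the formula for $f_\theta$, and for $\partial_\theta G$ it follows from the same dominated convergence bound as in Step 1. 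The implicit function theorem then gives that $\theta\mapsto h_\theta(u)$ is differentiable, with
\[
\frac{\partial h_\theta}{\partial\theta}(u)=-\frac{\partial_\theta F_\theta(y)}{f_\theta(y)}\Big|_{y=h_\theta(u)} .
\]

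\textbf{Boundary of $\mathcal{I}$.} If $\mathcal{I}$ contains boundary points of the natural parameter space, the argument above only covers the interior. At such points we would either restrict $\mathcal{I}$ to the interior or state the result with one-sided derivatives. The domination step is the one we expect to need care there.
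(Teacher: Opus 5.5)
Your proposal is correct and follows essentially the paper's route. The paper applies the inverse function theorem to the augmented map $(u,\theta)\mapsto(h_\theta(u),\theta)$, whose inverse has nonvanishing Jacobian because $f_\theta(y)\neq 0$; this is equivalent to your implicit-function-theorem argument on $F_\theta(y)-\Phi(u)=0$. Your version is more complete: you justify existence and joint continuity of $\partial_\theta F_\theta$ by dominated convergence (the paper only asserts it), you keep the factor $1/\varphi(\Phi^{-1}(F_\theta(y)))$ that the paper's Jacobian omits, and you correctly restrict $\theta$ to the interior of the natural parameter space.
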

\begin{proof}
    We interpret $h_\theta$ as a function $h\mathbb{R}^2\to\mathbb{R}^2, (u,\theta)\mapsto (h_\theta(u), \theta)$.
    Then it is easy to see that $h^{-1}(y,\theta)=(h_\theta^{-1}, \theta)$ and $Dh^{-1} = \begin{pmatrix}
        f_\theta & \frac{\partial F_\theta}{\partial \theta } \\
        0 & 1
    \end{pmatrix},$
    where the existence of $\frac{\partial F_\theta}{\partial \theta }$ can be easily concluded from the definition of exponential families in \eqref{eq: glm-expFamily} and common regularity assumptions of GLM. As the determinant $Dh^{-1}$ is simply $f_\theta$, it is invertible for all $\theta\in\mathbb{R}$ and $y\in(j_1,j_2)$. The inverse function theorem then implies that $h^{-1}_\theta$ and its inverse are partially differentiable in both arguments.
\end{proof}

\section{Details on PHO}\label{append: proofOrthog}
We start our discussion with an adaptation of \cite[Def. 2.1.]{rugamer_semi-structured_2024}:
 
\begin{defi}\label{def: ldglm-orthog-defIdentif}
	Let $((\beta_0, \beta), W)$ denote the parameters in \eqref{eq: intro-T1}, where $W$ denotes the weights of $\nu_p$; for clarity we write $\nu_p^W$.
	Let $\textbf{X}\in \mathbb{R}^{n\times k}$ be the design matrix containing the training data; for a simpler notation we also write $ \nu_p^{W}(\textbf{X})\in \mathbb{R}^{n\times k}$ for the matrix containing the NN's output on the training data.
    
	We then call a model of the form \eqref{eq: ldglm-nu1} identifiable together with a loss-function $l((\beta_0, \beta),W)$, if there exist no two configurations $((\beta_0, \beta),W)$ and $ ((\delta_0, \delta),V)$ of the model's parameters such that $l((\beta_0, \beta),W)=l((\delta_0, \delta),V)$ and for which:
	\begin{equation}\label{eq: ldglm-orthog-defIdentif-eq}
	\beta_0+\textbf{X}\beta+\nu_p^{W}(\textbf{X})\beta= \delta_0 + \textbf{X}\delta+\nu_p^{V}(\textbf{X})\delta
	\end{equation}
	and simultaneously
	\begin{equation}\label{eq: ldglm-orthog-defIdentif-ineq}
		\beta_0+ \textbf{X}\beta\neq\delta_0 + \textbf{X}\delta    \qquad \text{ or } \qquad \nu_p^{W}(\textbf{X})\beta\neq\nu_p^{V}(\textbf{X})\delta.
	\end{equation}

\end{defi}

As we show later, PHO guarantees this property without changing the model's predictions (neither predicted mean nor distribution) but performs a single orthogonalization step after training to improve interpretability of the model's terms.

\subsection{Computation of PHO}

As PHO is a post-hoc-method, we assume to be given a fitted model as in Equation~\eqref{eq: intro-T1} and also use the notation of \eqref{eq: ldglm-nu1}.
To extract all linear effects contained in $\nu_p$, we perform a multivariate linear regression of the $(n \times k)$-matrix $\nu_p(\textbf{X})$ w.r.t. the original design matrix with an added constant column (to also orthogonalize w.r.t. the intercept) $(\textbf{1}|\textbf{X})$. The least squares solution of such a model is then \cite{christensen_linear_1991} given by:
\begin{equation}
	\widehat{\nu_p(\textbf{X})} = (\textbf{1}|\textbf{X}) \hat{\Gamma} = M \cdot \nu_p(\textbf{X}),
\end{equation}
where $\Gamma = \begin{pmatrix}
	\gamma_{11} & \dots & \gamma_{1k} \\
	\vdots & \ddots & \vdots \\
	\gamma_{(k+1)1} & \dots & \gamma_{(k+1)k}
\end{pmatrix} = ((\textbf{1}|\textbf{X})^T (\textbf{1}|\textbf{X}))^{-1} (\textbf{1}|\textbf{X})^T \nu_p(\textbf{X})$ \cite{fahrmeir_multivariate_1983}, and where $M$ is the orthogonal projection on the column space of $\textbf{X}$, i.e. $M=\mathcal{P}_\textbf{X}$. For arbitrary $\textbf{x}\in\mathbb{R}^k$, the transformation $T_p$ from \eqref{eq: intro-T1} can then be rewritten as:
\begin{equation}\label{eq: ldglm-orthog-defNuBreve}
	T_p(x) = x + \nu_p(x) = x + (1|x) \hat{\Gamma} + \underbrace{(\nu_p(x)- (1|x)\hat{\Gamma})}_{=:\breve{\nu_p}(x)}.
\end{equation}
Writing then $\hat{\Gamma}_{-B}$ for $\hat{\Gamma}$ without its first row, we can re-write the computation of the predictor $\eta$ as: 
\begin{align}
	\eta(\textbf{x}) \overset{\eqref{eq: ldglm-orthog-defNuBreve}}{=}\beta_0 +  T_p(\textbf{x})\beta= \underbrace{\Bigg(\beta_0 + \sum_{i=1}^{k} (\gamma_{1i} \beta_i) \Bigg)}_{=:\widetilde{\beta_0}} + \textbf{x}\cdot\underbrace{\Bigg((\text{Id} + \hat{\Gamma}_{-B}) \cdot\beta\Bigg)}_{=:\widetilde{\beta}} + \breve{\nu_p}(\textbf{x})\beta.
\end{align}

In order to get back a model of the form \eqref{eq: ldglm-nu1}, we can define the diagonal matrix\\ $D:= \text{diag}(\beta_i/\widetilde{\beta}_i;\; i=1,\dots, k)$ and set $\widetilde{\nu_p}(\textbf{x}) := \breve{\nu_p}(\textbf{x}) \cdot D$, implying: 
\begin{equation}\label{eq: ldglm-orthog-orthogModel}
	\eta = \widetilde{\beta_0} + (\textbf{x} + \widetilde{\nu_p}(\textbf{x}))\widetilde{\beta}.
\end{equation}
It can now be easily seen that each column of $\breve{\nu_p}(\textbf{X})$ is orthogonal to the span of columns of \textbf{X}. This follows because, as is discussed above, $(\textbf{1}|\textbf{X}) \hat{\Gamma} = \mathcal{P}_\textbf{X}(\nu_p(\textbf{X}))$ and therefore $\breve{\nu_p}(\textbf{X})= \nu_p(\textbf{X})- (1|\textbf{X})\hat{\Gamma} = \mathcal{P}^\perp_\textbf{X}(\nu_p(\textbf{X}))$. This orthogonality is then obviously also fulfilled by $\widetilde{\nu_p}(X)$, in which the columns are just scaled by one scalar each compared to $\breve{\nu_p}(X)$. From this orthogonality, the following Lemma follows almost immediately:

\begin{lma}\label{lma: orthog-identif}
	The model \eqref{eq: ldglm-orthog-orthogModel} is identifiable. 
\end{lma}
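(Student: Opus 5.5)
We read identifiability in the sense of Definition~\ref{def: ldglm-orthog-defIdentif}, restricted to the class of orthogonalized models \eqref{eq: ldglm-orthog-orthogModel}. Concretely, we consider two parameter configurations $((\beta_0,\beta),W)$ and $((\delta_0,\delta),V)$. Both are taken to be outputs of PHO, so the columns of $\nu_p^W(\textbf{X})$ and $\nu_p^V(\textbf{X})$ are orthogonal to the column space of $(\textbf{1}|\textbf{X})$. We assume they produce the same predictor on the training data, i.e.\ \eqref{eq: ldglm-orthog-defIdentif-eq} holds. The goal is to show that neither alternative in \eqref{eq: ldglm-orthog-defIdentif-ineq} can hold. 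The loss condition plays no role, since we will show the two components coincide outright.

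The key step is a uniqueness-of-orthogonal-decomposition argument. Let $\mathcal{C}:=\operatorname{span}(\textbf{1},\textbf{x}_1,\dots,\textbf{x}_k)\subseteq\mathbb{R}^n$. The structured part $\beta_0\textbf{1}+\textbf{X}\beta$ lies in $\mathcal{C}$, and likewise for $\delta$. The NN part $\nu_p^W(\textbf{X})\beta$ is a linear combination of the columns of $\nu_p^W(\textbf{X})$, each of which lies in $\mathcal{C}^\perp$ by the orthogonality established just before the lemma. Hence $\nu_p^W(\textbf{X})\beta\in\mathcal{C}^\perp$, and similarly $\nu_p^V(\textbf{X})\delta\in\mathcal{C}^\perp$.

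Rearranging \eqref{eq: ldglm-orthog-defIdentif-eq} gives
\begin{equation*}
(\beta_0-\delta_0)\textbf{1}+\textbf{X}(\beta-\delta)=\nu_p^V(\textbf{X})\delta-\nu_p^W(\textbf{X})\beta .
\end{equation*}
The left-hand side lies in $\mathcal{C}$ and the right-hand side lies in $\mathcal{C}^\perp$. Since $\mathcal{C}\cap\mathcal{C}^\perp=\{0\}$, both sides vanish. Therefore $\beta_0+\textbf{X}\beta=\delta_0+\textbf{X}\delta$ and $\nu_p^W(\textbf{X})\beta=\nu_p^V(\textbf{X})\delta$, which contradicts \eqref{eq: ldglm-orthog-defIdentif-ineq}. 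Equivalently, one can apply the projection $\mathcal{P}_\textbf{X}$ (onto $\mathcal{C}$, including the intercept column) to both sides of \eqref{eq: ldglm-orthog-defIdentif-eq}: it fixes the structured parts and annihilates the NN parts.

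If $(\textbf{1}|\textbf{X})$ has full column rank, as is implicitly assumed by the inverse in $\hat\Gamma$, we get the further conclusion $(\beta_0,\beta)=(\delta_0,\delta)$. This is optional, since the definition only asks for equality of the fitted components.

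The main obstacle is making precise which configurations are compared. The lemma only holds if both configurations satisfy the orthogonality constraint, that is, identifiability is meant within the PHO-constrained class. One point needs care: the orthogonality of $\widetilde{\nu_p}(\textbf{X})$ survives the column rescaling by $D$. This requires $\widetilde\beta_i\neq0$ so that $D$ is defined, and we would state this as a standing assumption.
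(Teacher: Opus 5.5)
Your proposal is correct and follows the paper's proof: you rearrange the equality of predictors so that one side lies in $\operatorname{span}((\textbf{1}|\textbf{X}))$ and the other in its orthogonal complement, conclude that both sides vanish, and contradict the inequalities. Your caveats are implicit in the paper's setup: both configurations must be PHO-orthogonalized, and $\widetilde\beta_i\neq 0$ is needed so that $D$ is defined. Your rearrangement also avoids the sign slip in the paper's displayed equation, where $+\textbf{X}\widetilde{\beta}$ should read $-\textbf{X}\widetilde{\beta}$.
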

\begin{proof}
	Assume there exist weights $ (\widetilde{\beta_0},\widetilde{\beta}, W)$ and $(\widetilde{\delta_0}, \widetilde{\delta}, V)$ such that for these weights, model \eqref{eq: ldglm-orthog-orthogModel} fulfills both Equation~\eqref{eq: ldglm-orthog-defIdentif-eq} and one of the inequalities \eqref{eq: ldglm-orthog-defIdentif-ineq}, which immediately implies it fulfils both inequalities. Then, rearranging Equation~\eqref{eq: ldglm-orthog-defIdentif-eq} results in:
	\begin{equation}
		\underbrace{\widetilde{\nu_p}^W \widetilde{\beta} - \widetilde{\nu_p}^V \widetilde{\delta}}_{\in \text{ span}((\textbf{1}|\textbf{X}))^\perp} = \underbrace{\delta_0 + \textbf{X}\widetilde{\delta} - \widetilde{\beta_0}+ \textbf{X}\widetilde{\beta}}_{\in \text{ span}((\textbf{1}|\textbf{X}))},
	\end{equation}
	where the right side is an element of the span of the columns of $(\textbf{1}|\textbf{X})$ and the left side, as is discussed above, is an element of the orthogonal complement (as a linear combination of such elements). This implies that both sides must equal the zero vector, which contradicts the assumed inequalities \eqref{eq: ldglm-orthog-defIdentif-ineq}.

\end{proof}
Compared to \cite{rugamer_new_2023}, our PHO-formulation lays a larger focus on the multidimensional output of $\nu_p$ rather than the predictors computed from the structured and unstructured parts respectively as, unlike in standard SDDR models, the individual outputs of $\nu_p$ potentially have a meaningful interpretation as dimension-wise nonlinear correction terms. The actual weights of $\widetilde{\beta}$ are however equal to those computed by \cite{rugamer_new_2023} and we could in particular also make use of the efficient exact computation methods discussed in \cite[Alg. 2]{rugamer_new_2023} if necessary.

\section{Details on Applications}
    \subsection{Nested Cross Validation Details}\label{append: appl-nestedCross}
    For performance comparison in Section~\ref{sec: applications}, we used a nested cross validation approach with a 5-fold split in the outer loop and a simple holdout approach in the inner loop. 
    For each model, except for the GLM, we performed a grid search hyperparameter optimization in the inner loop and evaluated the optimal configuration of each method on the test data of the outer loop.
For models that benefit from early stopping, we do not refit on the combined training/validation data before testing in the outer loop, but instead use the validation data for early stopping to emulate a real application.
	\subsection{Cars Application}\label{append: appl-detailsCar}

	In the cars data application, two different LiD-GLM variants were tested. One including a distributional correction $T_d$ and one without such a correction. Both were based on a Gaussian distribution and an identity link function and were trained for a maximum of 3000 epochs with early stopping with a patience of 500 epochs. After training, we reloaded the best model parameters (as measured on validation data). We used the negative log-likelihood as a loss criterion and, as the dataset is quite small, only used one batch.

    The hyperparameter grids for the LiD-GLM variants were listed in the main paper. That for SDDR models is given in Table~\ref{tab: appl-cars-hyperpNN}.

    \begin{table}[t]
		\centering
		\small
		\tabcolsep=0.11cm
		\begin{tabular}{r|r|r|r|r|r|r}
			\toprule
			  learning rate & width & depth & width 2 & depth 2 & number batches & dropout rate\\
            \midrule
            {\renewcommand{\arraystretch}{0.5}\begin{tabular}{@{}r@{}} $10^{-4}$, $10^{-3}$ \\ $10^{-2}$, $10^{-1}$ \end{tabular}} &
            {\renewcommand{\arraystretch}{0.5}\begin{tabular}{@{}r@{}} 9, 18, \\36 \end{tabular}} &
            {\renewcommand{\arraystretch}{0.5}\begin{tabular}{@{}r@{}} 2, 3, \\4 \end{tabular}} &
            {\renewcommand{\arraystretch}{0.5}\begin{tabular}{@{}r@{}} 9, 18, \\36 \end{tabular}} &
            {\renewcommand{\arraystretch}{0.5}\begin{tabular}{@{}r@{}} 2, 3, \\4 \end{tabular}} &
            1, 10 & 0, 0.1, 0.2\\
			\bottomrule
		\end{tabular}
		\captionof{table}{Tested hyperparameters for the SDDR model on the ``Cars''dataset. Used was a Normal distribution family and a maximum of 3000 epochs.}
        \label{tab: appl-cars-hyperpNN}
	\end{table}

	\subsection{Stroke Application}\label{append: appl-detailsStroke}
	
	In the stroke application, the LiD-GLMs used a  Bernoulli distribution and a logit link function and were trained for a maximum of 3000 epochs with early stopping with a patience of 500 epochs. As the datasest is quite large, we used 10 minibatches for training. Again, we used the negative log-likelihood as a loss criterion and after training, we reloaded the best model parameters as measured on validation data.
    The tested hyperparameters for LiD-GLm were listed in the main paper, those for NNs are listed in Table~\ref{tab: appl-stroke-hyperpNN} and those for SDDR models in Table~\ref{tab: appl-stroke-hyperpSDDR}.

    \begin{table}[t]
		\centering
		\small
		\tabcolsep=0.11cm
		\begin{tabular}{r|r|r|r}
			\toprule
			  learning rate & width & depth & number of batches\\
            \midrule
            {\renewcommand{\arraystretch}{0.5}\begin{tabular}{@{}r@{}} $10^{-5}$, $10^{-4}$ \\ $10^{-3}$ \end{tabular}} &
            {\renewcommand{\arraystretch}{0.5}\begin{tabular}{@{}r@{}} 8, 16, 32,\\ 64, 100 \end{tabular}} &
            {\renewcommand{\arraystretch}{0.5}\begin{tabular}{@{}r@{}} 2, 3, 4 \\ 5, 6 \end{tabular}} &
            1, 10\\
			\bottomrule
		\end{tabular}
		\captionof{table}{Tested hyperparameters for the neural network on the ``Stroke''dataset. Used was a maximum of 5000 epochs.}
        \label{tab: appl-stroke-hyperpNN}
	\end{table}

    \begin{table}[t]
		\centering
		\small
		\tabcolsep=0.11cm
		\begin{tabular}{r|r|r|r|r}
			\toprule
			  learning rate & width & depth & weight decay & dropout rate\\
            \midrule
            {\renewcommand{\arraystretch}{0.5}\begin{tabular}{@{}r@{}} $10^{-4}$, $10^{-3}$ \\ $10^{-2}$ \end{tabular}} &
            {\renewcommand{\arraystretch}{0.5}\begin{tabular}{@{}r@{}} 36, 64,\\ 100 \end{tabular}} &
            {\renewcommand{\arraystretch}{0.5}\begin{tabular}{@{}r@{}} 3, 4 \\ 5, 6 \end{tabular}} &
            0, 0.01 & 0, 0.1\\
			\bottomrule
		\end{tabular}
		\captionof{table}{Tested hyperparameters for the SDDR model on the ``Stroke''dataset. Used was a Bernoulli family, a maximum of 3000 epochs with a batch size of 3000 with a patience of 200, an output shape of one and zero degrees of freedom.}
        \label{tab: appl-stroke-hyperpSDDR}
	\end{table}

\section{Additional Results}
In this section we show additional results for the applications in Section~\ref{sec: applications} that were not already listed in the main paper.

\subsection{Cars Application}

In Figure \ref{fig: appl-cars_perf-lipDep_with_nu_d}, we show the performance of LiD-GLM with a distributional correction $T_d$ in the nested cross validation on the ``Cars'' data  dependent on $L_p^b$. The trend is largely the same as for LiD-GLM without $T_d$.
\begin{figure}
\centering
    	\includegraphics[height=9cm]{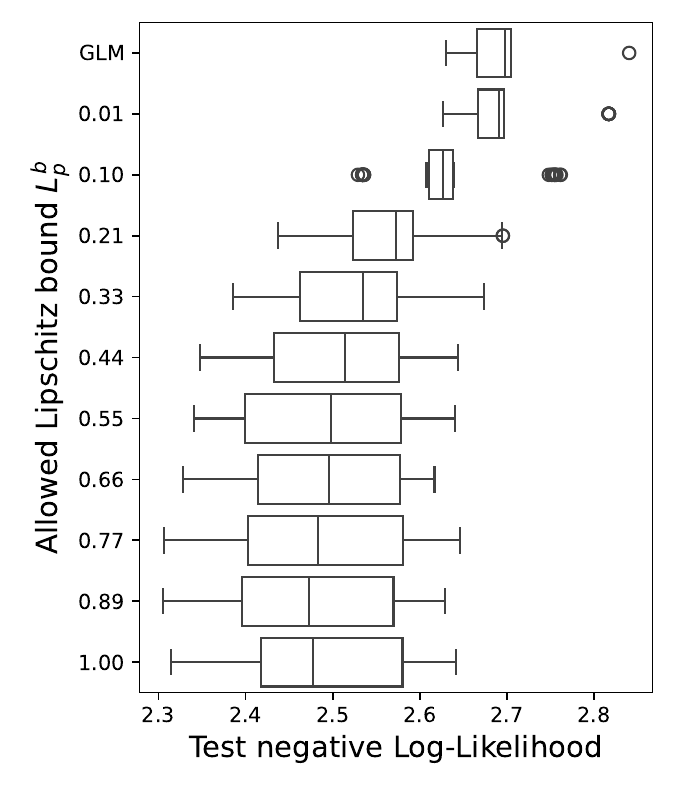}
    	\caption{Boxplots visualizing the impact of the allowed Lipschitz constant on LiD-GLM model performance pooled over all splits and the top 10 model hyperparameter combinations on the ``Cars'' data.}
    	\label{fig: appl-cars_perf-lipDep_with_nu_d}
\end{figure}

\subsection{Stroke Application}\label{suppl: addRes-stroke}
In Table~\ref{tab: appl-stroke-coeffTable-notFrozen} we show the coefficient table for a LiD-GLM with the same hyperparameters as in Section~\ref{sec: applications-stroke-interpr} but with linear coefficients $\beta$ that aren't frozen during training of the NNs. As is to be expected, this model hat an improved NLL of 0.116. However, the linear coefficients were also changed to a much larger degree than in the ``Cars'' application. The new $R^2_i$-values should also be interpreted in the context of these changed linear coefficients, meaning that even large $R^2_i$ only certify similarity to these new changed linear coefficients. Also, the $R_i^2$ for the ``heart disease'' covariate is drastically larger than for the model with frozen $\beta$. We also show corresponding PDP plots in Figures \ref{fig: appl-stroke-pdp-notFrozen} and \ref{fig: appl-stroke-pdpTransfVars-notFrozen}. The PDP plots of the numeric covariates in Figure~\ref{fig: appl-stroke-pdp-notFrozen} look almost identical to those in Figure~\ref{fig: appl-stroke-pdp}, just on a different scale. The PDPs for ``hypertension'', ``heart disease'' and ``ever married'' in Figures~\ref{fig: appl-stroke-pdpTransfVars-notFrozen} and \ref{fig: appl-stroke-pdpTransfVars} are also almost identical (modulo a change in scale), but the other PDPs appear to show significant differences.
\begin{table}[t]
\centering
	\begin{tabular}{lrrrrrrrrrr}
		\toprule
        & $\beta_0$ & $\beta_1$ &  $\beta_2$ & $\beta_3$ & $\beta_4$ & $\beta_5$ & $\beta_6$ & $\beta_7$ & $\beta_8$ & $\beta_9$ \\
        \midrule
	       LM & -1.0 & -0.41 & 0.09 & 1.23 & 1.13 & 0.87 & 0.13 & 0.42 & -0.16 & 0.13 \\
	       LiD-GLM & -7.83 & -1.38 & 0.25 & 4.09 & 3.76 & 5.33 & 1.32 & 1.28 & -1.6 & -0.19 \\
	       $R^2_i$ & - & 0.96 & 0.99 & 0.52 & 0.09 & 0.93 & 0.98 & 0.99 & 1.0 & 0.86 \\
        \bottomrule

    \end{tabular}
    \caption{Coefficient table for LiD-GLM trained on the Stroke data. Shown are the coefficients of the Linear Model and the LiD-GLM (including PHO) and the coefficients of determination $R^2$ for the variables $\beta_0$=Intercept, $\beta_1$=Sex, $\beta_2$=Age, $\beta_3$=Hypertension, $\beta_4$=Heart disease, $\beta_5$=ever married, $\beta_6$=Residence type, $\beta_7$= Average glucose level, $\beta_8$=BMI, $\beta_9$=smoking status.}
	\label{tab: appl-stroke-coeffTable-notFrozen}
\end{table}

\begin{figure}
	\centering
	\includegraphics[width=\textwidth]{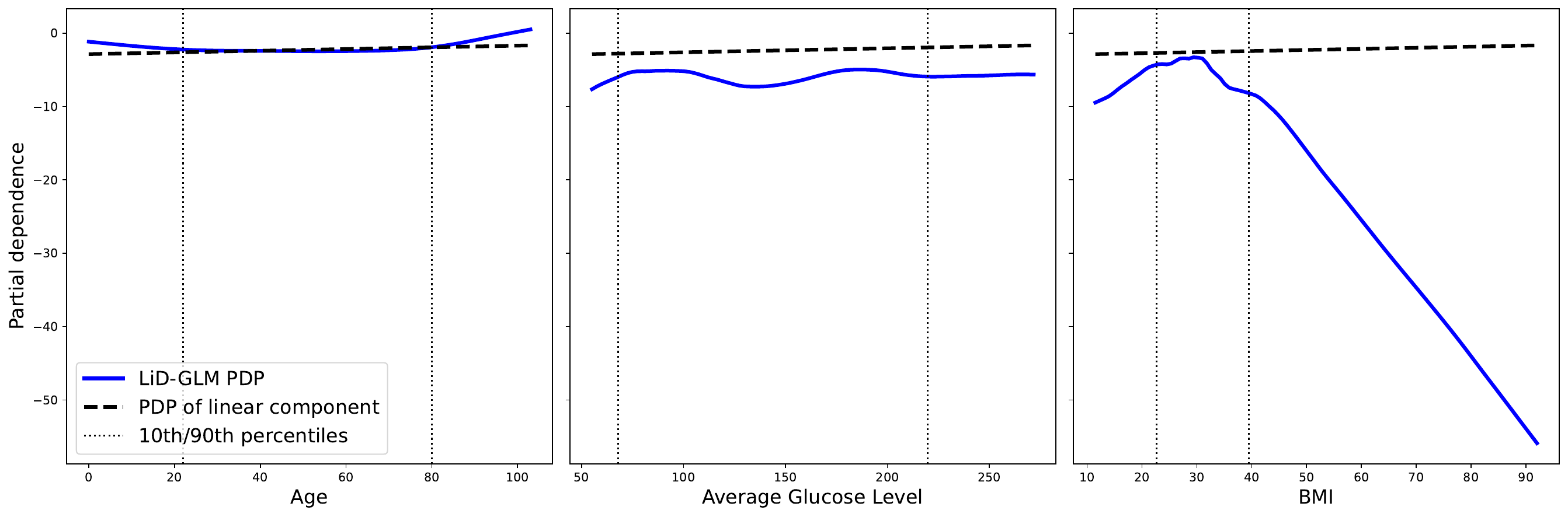}
	\caption{PDP plots of LiD-GLM on the continuous variables in stroke data example w.r.t. predicted logits.}
	\label{fig: appl-stroke-pdp-notFrozen}
\end{figure}

\begin{figure}[t]
	\centering
	\includegraphics[width=0.9\textwidth]{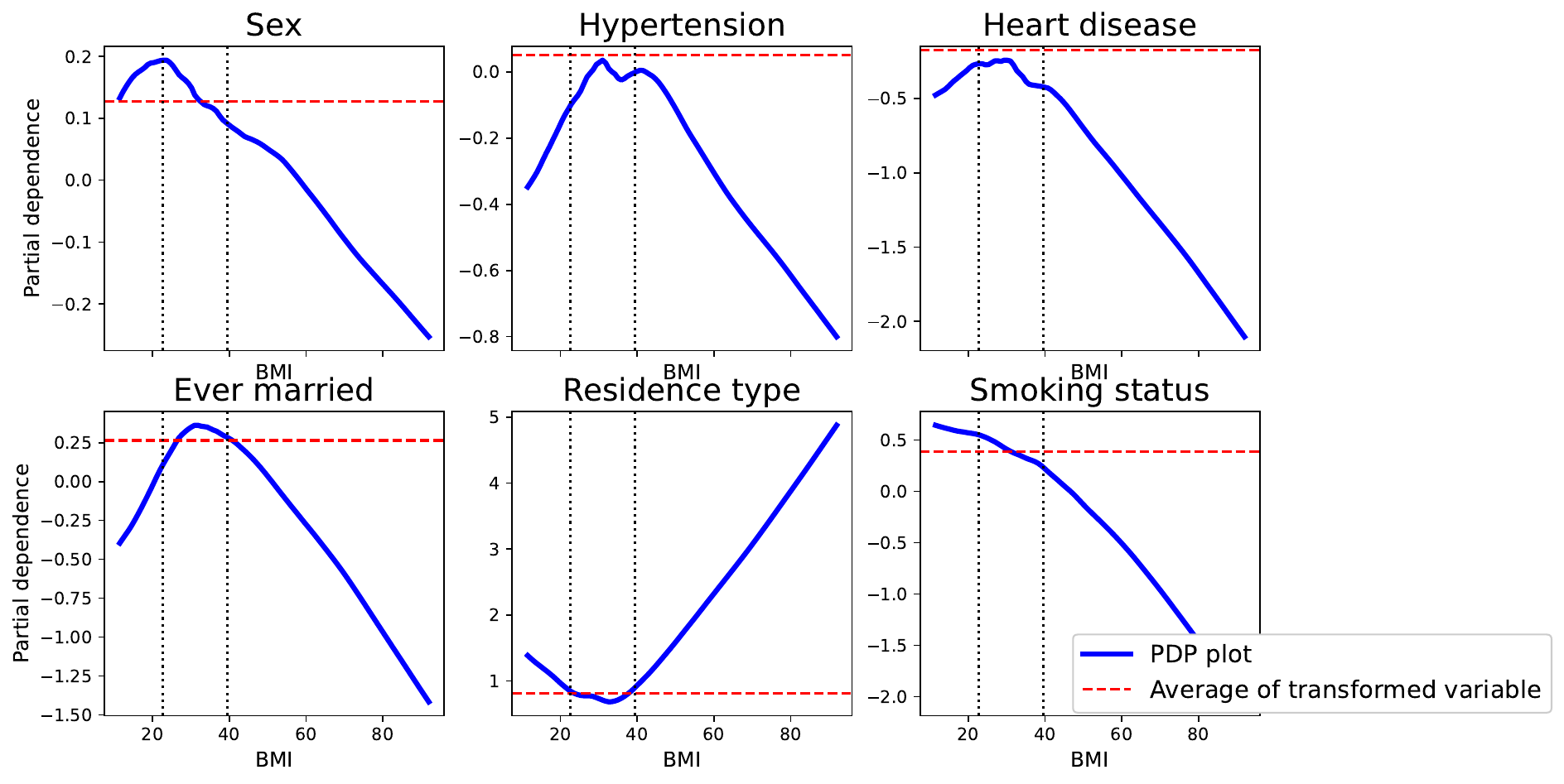}
	\caption{PDP plots of transformation $T_p$ w.r.t.\ ``BMI'' of the former binary covariates.}
    \label{fig: appl-stroke-pdpTransfVars-notFrozen}
\end{figure}

\subsection{PHO}\label{suppl: addRes-Pho}
In Figure~\ref{fig: appl-cars_perf-PHO_vis_with_nu_d}, we show the effect of PHO for LiD-GLM trained on the cars data with a distributional correction $T_d$ and in Figure \ref{fig: appl-cars_perf-PHO_vis_no_nu_d} we show the results for all splits for LiD-GLM without $T_d$. As can be easily seen, the reduction in variance for the estimated linear coefficients is almost the same in both cases.

In Figure \ref{fig: appl-stroke_perf-PHO_vis_all_splits}, we show corresponding results for the ``Stroke'' application. This is a less relevant setting for LiD-GLMs as a good performing model on this dataset requires a relatively large allowed Lipschitz constant and therefore a drastically reduced interpretability. However, PHO still reduces the variance and absolute values in the parameter estimates for four covariates ``sex'', ``age'', ``hypertension'' and ``heart disease'', but seems to have no such effect for the other five covariates.

\begin{figure}[t]
    \includegraphics[width=\textwidth]{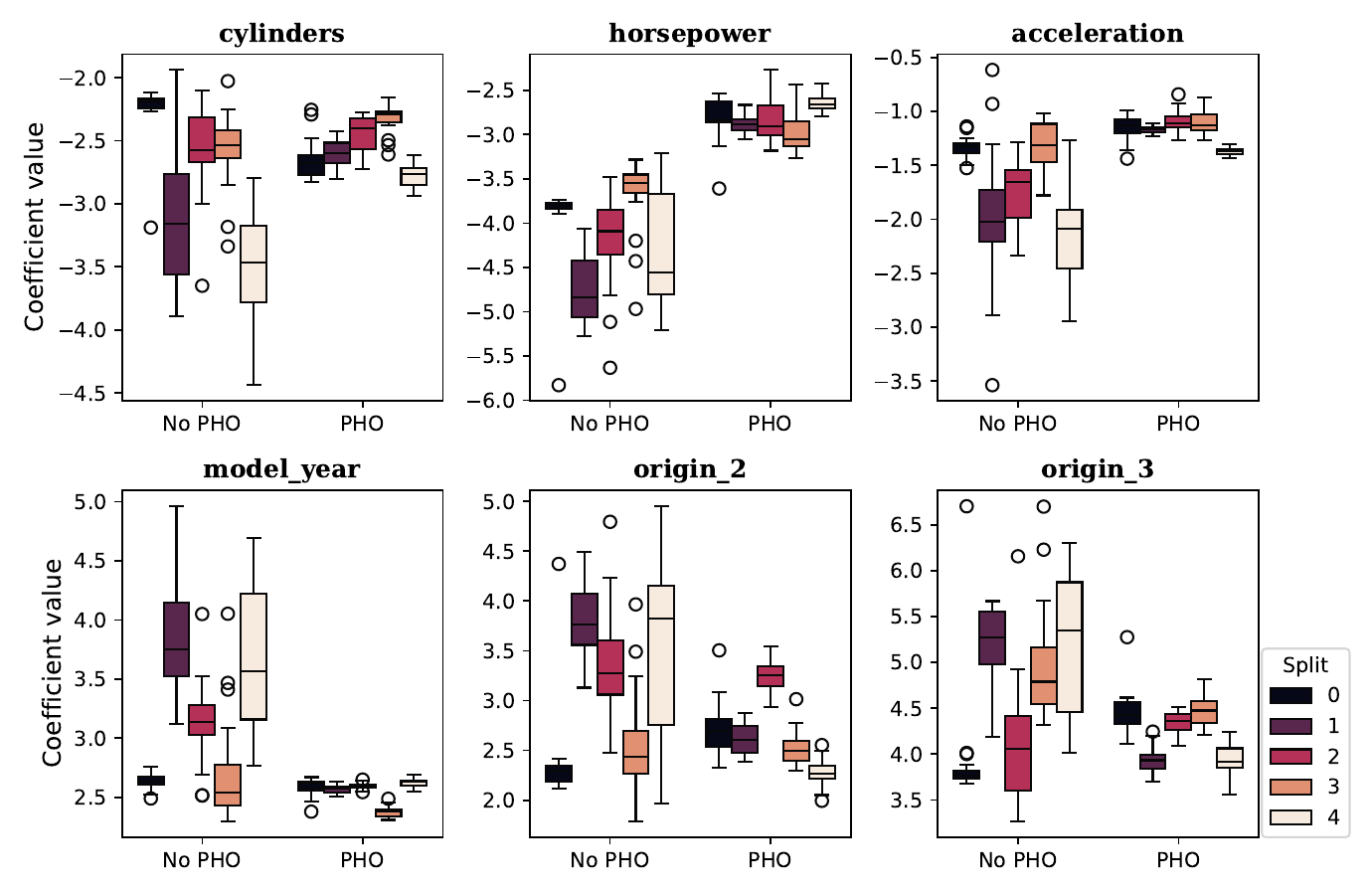}
	\caption{Fluctuation in LiD-GLM estimation of $\beta$ on all splits on cars dataset with $T_d$ and without PHO.}
	\label{fig: appl-cars_perf-PHO_vis_with_nu_d}
\end{figure}

\begin{figure}[t]
    \includegraphics[width=\textwidth]{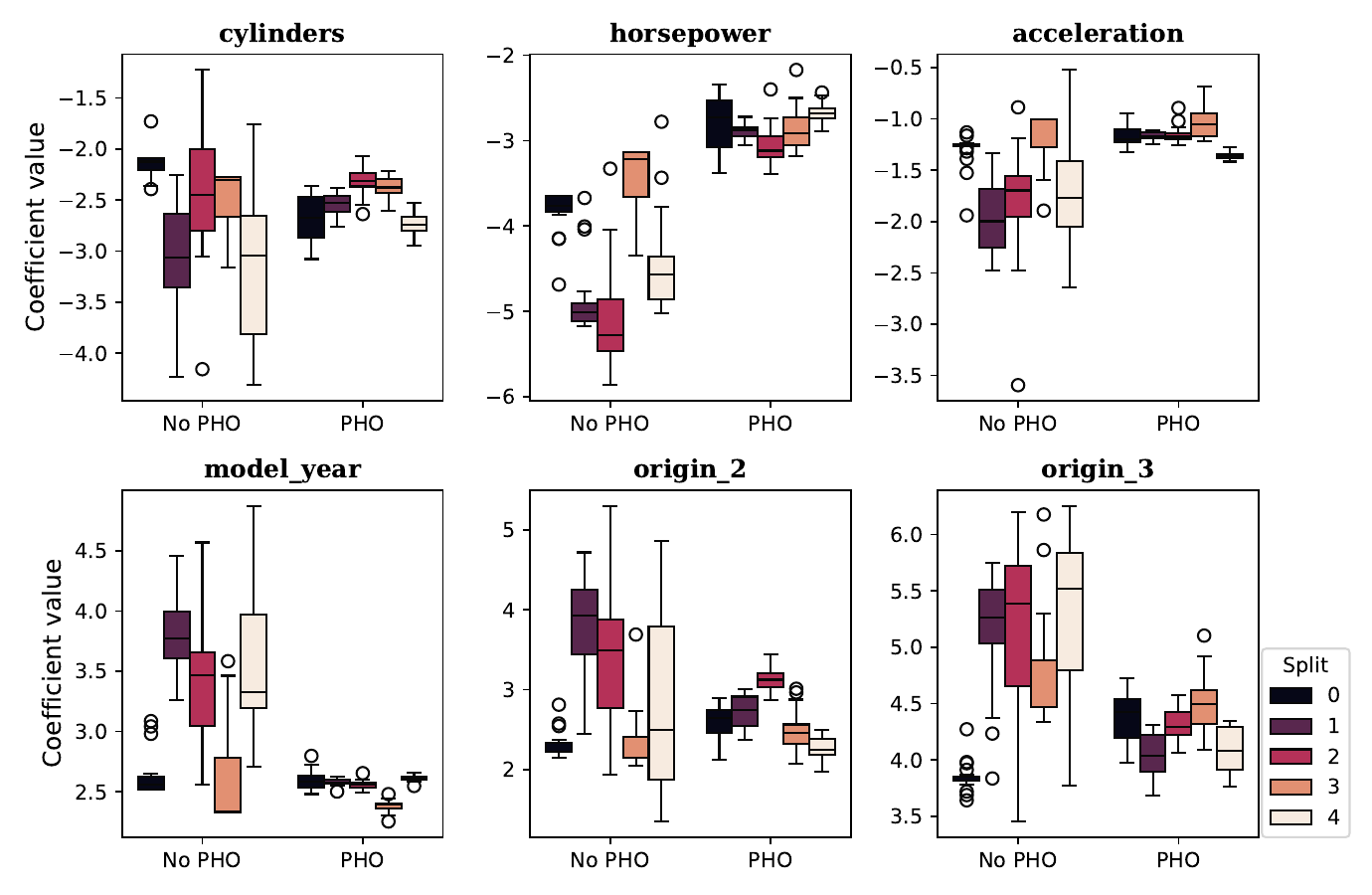}
	\caption{Fluctuation in LiD-GLM estimation of $\beta$ on all splits on cars dataset without $T_d$ with and without PHO.}
	\label{fig: appl-cars_perf-PHO_vis_no_nu_d}
\end{figure}

\begin{figure}[t]
    \includegraphics[width=\textwidth]{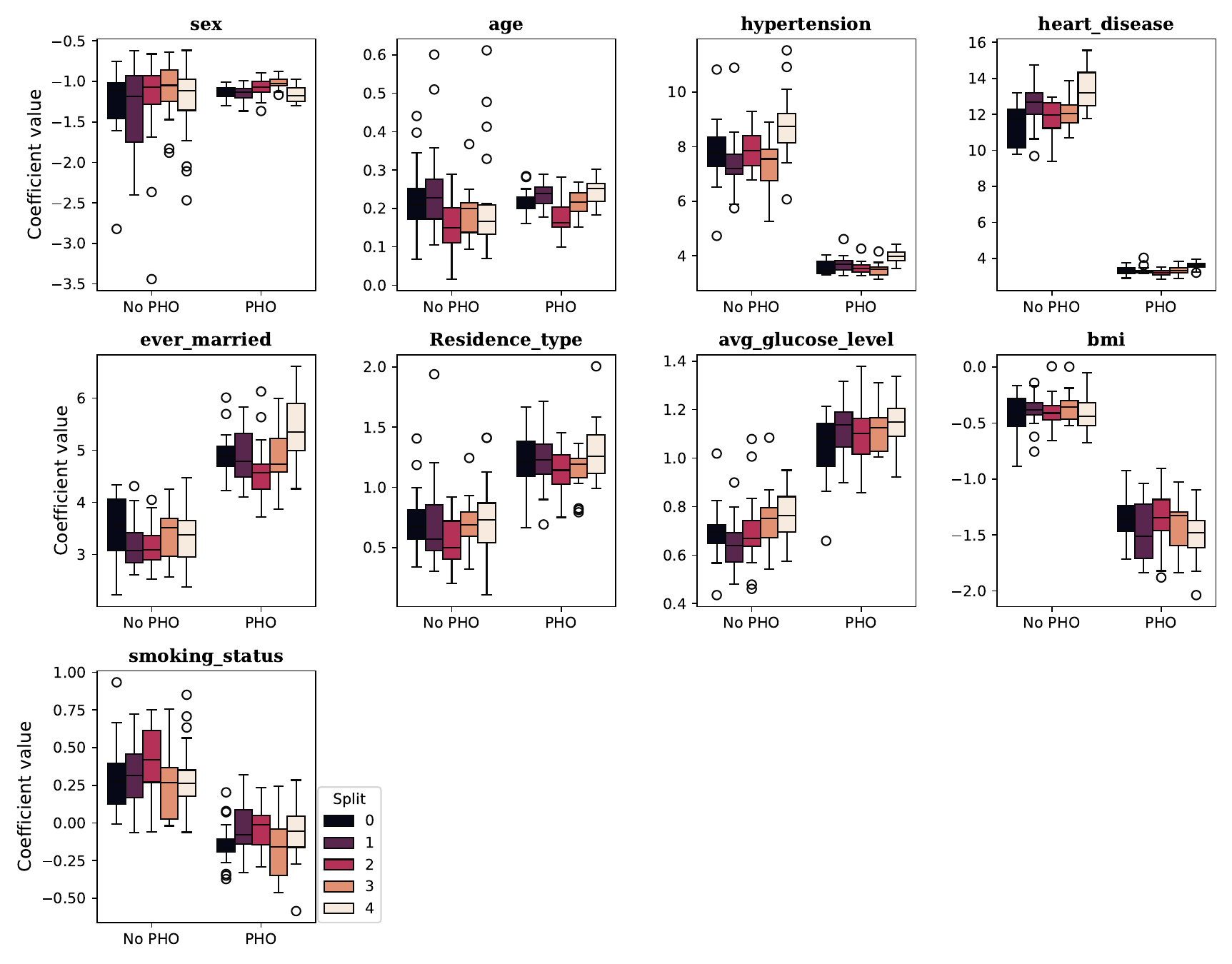}
	\caption{Fluctuation in LiD-GLM estimation of $\beta$ on all splits on stroke dataset with and without PHO.}
	\label{fig: appl-stroke_perf-PHO_vis_all_splits}
\end{figure}

\end{document}